\pdfoutput=1  
\documentclass[letterpaper, 10 pt, conference]{ieeeconf}  

\IEEEoverridecommandlockouts                              

\usepackage{graphicx} 
\usepackage{mathptmx} 
\usepackage{times} 
\usepackage{amsmath}
\usepackage{amssymb}
\usepackage{booktabs}
\usepackage{multirow}
\usepackage{multicol}
\usepackage{array}
\usepackage{xcolor}
\usepackage{cuted}
\usepackage{url}
\usepackage{pifont}
\usepackage{subcaption}
\usepackage[labelfont=bf, labelsep=period]{caption}
\let\labelindent\relax 
\usepackage{enumitem}
\usepackage{amsthm}

\newtheorem{theorem}{Theorem}
\newtheorem{corollary}{Corollary}
\theoremstyle{definition}

\newenvironment{flushitemize}
  {\begin{itemize}[leftmargin=*, labelindent=0pt]}
  {\end{itemize}}

\usepackage{multibib}
\newcites{app}{Appendix References}

\usepackage{listings}
\usepackage{mdframed}

\newcommand{\red}[1]{}

\newcommand{\ourMethod}[0]{GAP}

\usepackage[hidelinks,breaklinks=true]{hyperref}
\hypersetup{
  pdftitle={Robots That Take Initiative: A Framework for Building and Evaluating Proactive Robots},
  pdfauthor={Maithili Patel, Sonia Chernova},
  pdfsubject={Human-Robot Interaction},
  pdfkeywords={proactive robot assistance, human-robot interaction, closed-loop evaluation, goal anticipation}
}

\title{\LARGE \bf Robots That Take Initiative: A Framework for \\ Building and Evaluating Proactive Robots
\vspace{-3mm}
}

\author{Maithili Patel$^{1}$, Sonia Chernova$^{1}$%
\thanks{$^{1}$Both authors are with School of Interactive Computing, Georgia Institute of Technology, USA
        {\tt\footnotesize maithili@gatech.edu}}%
}

\begin{document}

\maketitle


\pagestyle{plain}

%
\begin{abstract}
Effective robot assistance beyond narrow roles and repetitive tasks requires robots to be \textit{proactive} -- to decide what needs to be done rather than waiting to be told. While proactivity is increasingly explored, it lacks a unified formulation, and work in the domain is typically evaluated offline against static human models that cannot capture the effect of a robot's actions on the environment and the user's own behavior. We introduce a unified formalism for proactive robot assistance, organize it into three levels, and provide a framework to address the highest level of unprompted proactive assistance. We then show that offline evaluation overstates performance in this setting, and contribute a closed-loop evaluation with a human model that adapts to the robot. Finally, we present a method, GAP, that instantiates our framework, learning from passive observation to anticipate user goals and act. Under closed-loop evaluation, prior state-of-the-art methods collapse, in some cases adding more work than they save, while GAP remains robust and substantially outperforms them\footnote{Code for the evaluation setup and GAP model: \url{https://github.com/Maithili/GAP}}.
\end{abstract}

\section{Introduction}
Robots are becoming increasingly capable, moving beyond narrow, repetitive jobs toward general-purpose assistance in everyday life. Home and service robots are now being developed to operate in unstructured environments, from commercial humanoids designed for household chores to research systems for domestic mobile manipulation~\cite{nasiriany2026robocasa365, shafiullah2023bringing}. Unlike factory robots, these systems must operate with little supervision, handling tasks such as laundry, tidying, or meal preparation while users are away or occupied.


This shift toward capable robots operating in unstructured settings with little supervision reframes robotics as not only \textit{how} a robot performs a task, but \textit{what} it should do and \textit{when}. Most current approaches assume a human instructs the robot on what to do~\cite{brohan2023can, xiao2025robi}, but instructing every action does not scale. Tasking a robot can itself be burdensome~\cite{daminger2019cognitive}, and user studies find that people would rather robots learn their habits from observation than receive constant instruction~\cite{acknowledge}. Effective human assistance stems from taking initiative rather than waiting for instructions. Studies of domestic work describe the ideal helper as someone who identifies needs and acts without being asked~\cite{elden2019nanny}, while teamwork research finds that high-performing teams coordinate implicitly, anticipating and adapting to each other's needs rather than relying on explicit instruction~\cite{rico2008team}. In both cases, the key skill is not following orders, but recognizing what needs to be done. In other words, users want robots to be \textit{proactive}.

Despite this interest, research toward proactivity remains fragmented. Existing works address different versions of the problem under varying assumptions, without a unified framework connecting them.
A common assumption, especially when no explicit goal is given, is to evaluate against a static human model. Such offline evaluation ignores how robot actions reshape the environment and the user's own behavior, rewarding policies that look strong offline but degrade once real users adapt. Consequently, two key open problems remain: formalizing proactivity as a whole and evaluating proactive behavior in closed loop. Solving them is critical for understanding existing approaches and revealing what challenges remain unsolved.

In this work, we contribute the first unified formalism and three levels of proactivity for robotic assistance. We focus on the highest level, \textit{unprompted proactive assistance}, where the robot is given no goal or input, as it draws focus to the challenges of unstructured settings. Along with a detailed problem formulation, we decompose the problem into algorithmic sub-problems, and situate existing approaches, showing how they address a scoped version of the problem.

Second, we demonstrate that existing methods addressing long-horizon, goal-free proactivity~\cite{patel_slatepro_2023, bartoli_streak_2025} share a deeper limitation of being evaluated through static datasets or human models. Static user models, which do not adapt to the robot, cannot capture how a proactive robot's actions shift both the environment and the user's behavior, thus rewarding policies that look strong against a static human but degrade once real users adapt. 
To address this, we contribute a closed-loop evaluation setup with a human model that leverages an LLM to adapt to robot actions, and benchmark existing methods to expose their shortcomings.

Finally, we present a model that operationalizes our framework for unprompted proactivity, learning from passive observation to anticipate user goals and adapt its actions. On our closed-loop setup, it substantially outperforms prior approaches developed under static evaluation assumptions.

In summary, we make the following contributions:
\begin{enumerate}
    \item A general formalism for effective proactive robots, a breakdown of the levels of proactivity, and a decomposition of unprompted proactivity, the highest level in our categorization
    \item An interactive closed-loop evaluation setup that uses an LLM as a proxy for human adaptation, and a characterization of how prior work's static, open-loop evaluation is misaligned with it
    \item A model that operationalizes our proposed unprompted-proactivity framework
\end{enumerate}

\section{A Formalism for Proactivity in Robotics}

Research on robot proactivity lacks a unified problem formulation or shared vocabulary. The terms \textit{proactivity} and \textit{anticipation} have been used loosely for many forms of agent initiative, including deciding the precise timing of an action~\cite{mascaro2023hoiabot, huang2016anticipatory, nemlekar2023transfer}, choosing to initiate an interaction~\cite{garrell2017teaching}, providing extra information~\cite{garrell2013proactive, peng2019design}, actively asking questions~\cite{dogan2025model, park2023clara, patel2025adapt}, and executing actions that serve anticipated needs~\cite{patel_stot_2023, patel_slatepro_2023}. Conversely, some works that enable proactivity do not use this terminology at all~\cite{puig_nopa_2023}. This inconsistency makes it challenging to connect prior works and build on them. 
Prior work~\cite{kochvandenbroek2024proactive} has noted the term's inconsistent use, but their treatment focuses on cataloguing the proactive interactions.
To address the resulting gap, we formalize proactive robot behavior, categorize it into three levels, and  characterize the technical sub-problems and computational approaches at the highest level of proactivity.


\subsection{Problem Formulation}
\label{sec:problem_formulation}

We consider an interactive setting in which a user and a robot share an environment and act over time, as illustrated in Fig.\ref{fig:factor_graph}. The environment 
state $s_t$
evolves under the joint actions of the human $a^h_t$ and the robot $a^r_t$, each of which can be no-op if the agent is idle. The interaction unfolds as the joint state-action sequence
\begin{equation}
\xi_{0:T} = s_0, a^r_0, a^h_0,\, s_1, a^r_1, a^h_1,\, \ldots, s_T, a^r_T, a^h_T
\end{equation}

We assume that the \textbf{human} performs actions according to a goal-oriented policy $\pi^h$ that pursues goals $g_t$ (Fig.\ref{fig:factor_graph}) such as setting the table for dinner or watering the plants, because completing them generates value. The value $v(g_t, s_t, h_t)$ of a goal depends on the goal itself, the state $s_t$, and the hidden state $h_t$, which captures internal states such as hunger and fatigue along with external influences such as other people and events. For example, a prepared meal is worth more when the user is hungry, and watering plants is worth more when the soil is dry. The total value of action sequence $\xi$ is the sum over all completed goals $\mathcal{G}_c = \{ g_t \}$,
\begin{equation}
V(\xi) = \sum_{g_t \in \mathcal{G}_c} v(g_t, s_t, h_t)
\end{equation}

The human takes actions $a^h_t = \pi^h(s_t, g_t)$ to reach their goals, and each action imposes a cost $c_h(a^h_t)$. The cost of a human action can be modeled with respect to various factors~\cite{malik2019complexity, smith2020assessing}, including physical effort, consumed resources, preference and enjoyment, and modulation by internal states~\cite{peternel_robot_2018} such as boredom or fatigue. The total cost associated with the human action sequence is
\begin{equation}
C_h(\xi) ~=~ \sum_t c_h(a^h_t)
\end{equation}
The human acts to maximize the \emph{net value} $V(\xi) - C_h(\xi)$, generating as much value as possible at the lowest cost.

The \textbf{robot} uses a policy $\pi^r$ to choose its actions, conditioned in general on a goal specification $g_t$, the human policy $\pi^h$, and the observed history $\xi_{0:t-1}$\footnote{Fig.\ref{fig:factor_graph} shows $\pi^r$ as Markovian for simplicity, but it can condition on the full history $\xi_{0:t-1}$ and other variables, including $\pi^h$ and $g_t$.},
\begin{equation}
a^r_t = \pi^r(g_t, \pi^h, s_t, \xi_{0:t-1})
\label{eq:robot_policy}
\end{equation}

Robot actions incur a cost $c_r(a^r_t)$ from physical wear and charge consumption, though this is generally small relative to the human effort a robot action can displace and is often ignored entirely in robotics applications. We assume the robot derives no inherent value from task completion, so its aim is to maximize the combined net value $V(\xi) - C_h(\xi) - C_r(\xi)$, with $C_r$ often assumed to be zero.

\begin{figure}
    \centering
    \includegraphics[width=1.0\linewidth]{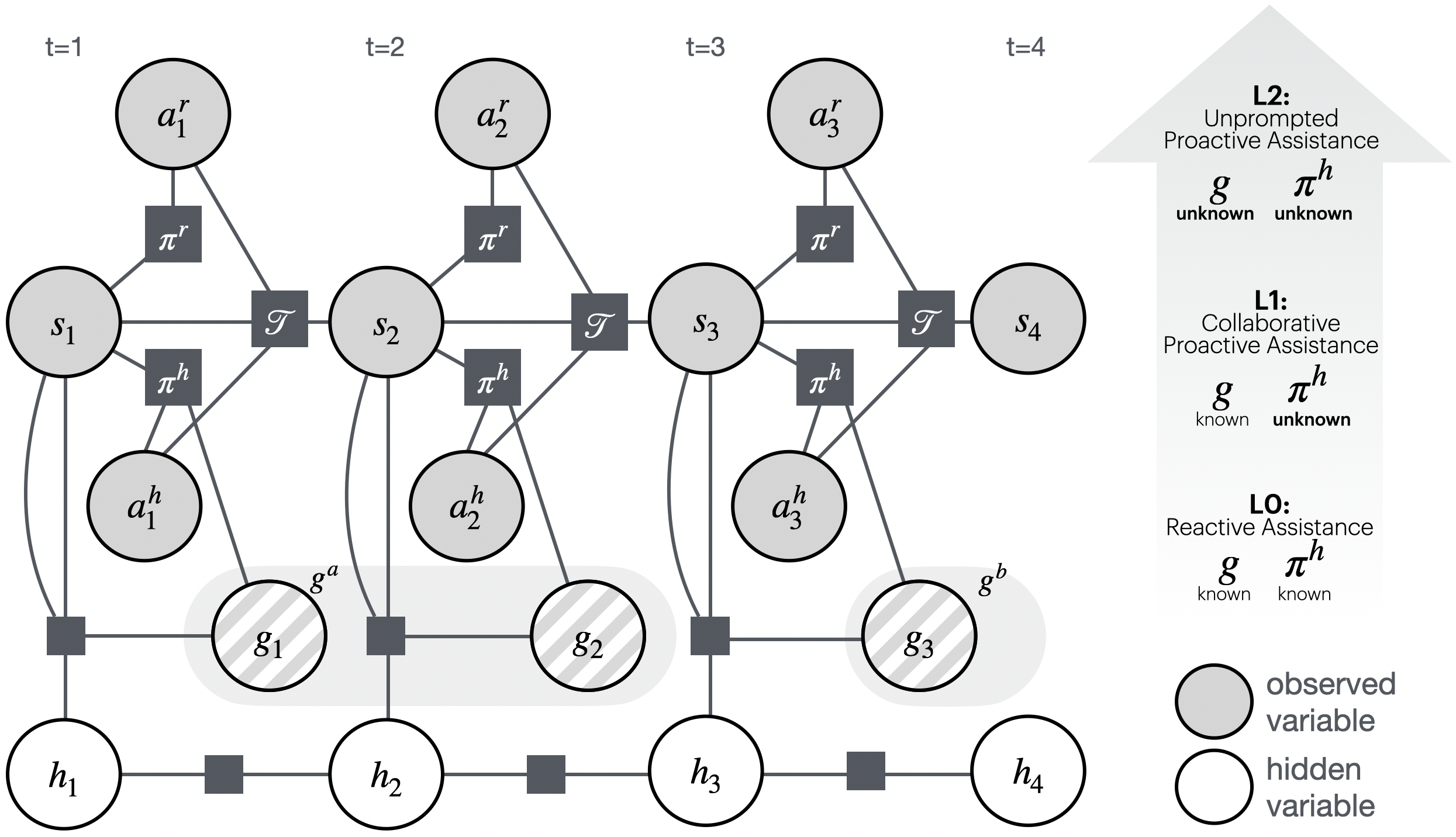}
    \setlength{\abovecaptionskip}{-8pt}
    \setlength{\belowcaptionskip}{-18pt}
    \caption{\small{Variables and processes in the proactive assistance problem. The fully observed environment evolves under joint human $a^h_t$ and robot $a^r_t$ actions, while the hidden user state $h_t$ evolves in parallel and shapes the active goal $g_t$ at each timestep. We introduce three levels of proactivity for the robot policy with increasing unknowns.}}
    \label{fig:factor_graph}
\end{figure}


We define the counterfactual trajectory $\xi'$ as the sequence that would unfold in the robot's absence, with the human acting alone under the same dynamics. The robot's actions change both the cost of the human's actions toward a goal and which goals are achieved. Because these effects propagate through the trajectory, we \textbf{measure overall effectiveness of robot actions} by comparing $\xi$ against $\xi'$. This comparison yields a change in value and a change in cost,
\begin{equation}
\Delta V(\pi^r) = V(\xi) - V(\xi')
\end{equation}
\begin{equation}
\Delta C(\pi^r) = \big( C_h(\xi) + C_r(\xi) \big) - \big( C_h(\xi') + C_r(\xi') \big)
\end{equation}

The net $\Delta V - \Delta C$ characterizes the value of the robot's presence. The robot can be \textbf{effective} by reducing the cost of achieving a goal the user would have completed anyway ($\downarrow\Delta C$), achieving a goal that would otherwise go uncompleted ($\uparrow\Delta V$), or freeing the user's time for additional valued goals that would otherwise be missed ($\uparrow\Delta V$). Conversely, it is \textbf{ineffective} when an incorrect action forces the human to spend extra effort undoing or redoing actions ($\uparrow\Delta C$).

\subsection{Levels of Proactive Assistance}

A robot must act proactively when it is not explicitly given all the information needed to determine its actions. Formally, the robot policy $\pi^r$, defined in Eq.~\ref{eq:robot_policy}, is conditioned on the goal $g_t$, the human policy $\pi^h$, the interaction history $\xi_{0:t-1}$, and the current state $s_t$, but some of this information may be unavailable at run time. For example, the robot may not be told that the user wants the home tidied, or it may know the goal but not the user's preferences or likely next actions. The amount of missing information determines the level of proactivity required. We define three levels with increasing uncertainty, shown in Fig.~\ref{fig:factor_graph}: a reactive robot, where all variables are known; proactive collaboration, where the goal is known but the human policy is not; and unprompted proactive assistance, where no goal is provided.

\textbf{L0: Reactive Assistance $\pi^r(g_t, \pi^h, s_t, \xi_{0:t-1})$.} 
We define Reactive Assistance as the condition where no proactivity is required, as the robot has access to both $g_t$ and $\pi^h$.
Knowing the user's goal $g_t$ lets the robot plan toward it, and knowing the low-level policy $\pi^h$ lets it predict effective assistive actions with the human in the loop, or treat the human as a null policy when acting alone. 
Such exact knowledge is available only in rigid settings where the goal is fully specified and either the user is absent, or the order of user actions is predefined, such as in industrial manufacturing.

\textbf{L1: Collaborative Proactive Assistance $\pi^r(g_t, s_t, \xi_{0:t-1})$.} 
We define Collaborative Proactive Assistance as the second level of proactivity, where the robot has access to $g_t$, but not $\pi^h$.
Lacking knowledge of the human policy, the robot cannot know the user's next action and must infer their preferred strategy from the history $\xi_{0:t-1}$. Prior works learn preferred strategies from collaborative data~\cite{zhao2022coordination} or passive observation~\cite{huang2016anticipatory,nemlekar2023transfer}, or learn the timing of actions for fluid interaction~\cite{mascaro2023hoiabot,koppula2015anticipating}. Newer work~\cite{huang2025hierarchical} extends this idea to include mode switching that returns brief control to the human for failure recovery. Still, exact goal knowledge is realistic only in industrial settings with concrete, repetitive tasks such as multi-step assembly.

\textbf{L2: Unprompted Proactive Assistance $\pi^r(s_t, \xi_{0:t-1})$.} 
We define Unprompted Proactive Assistance as the highest level of proactivity, where the robot does not have access to either $g_t$ or $\pi^h$.
Since no goal is specified, the robot must act as a proactive partner that fulfills needs without being asked. It predicts effective actions without knowing $g_t$, relying instead on past interactions $\xi_{0:t-1}$ to infer the user's needs. 
Existing methods, detailed in Sec.~\ref{sec:existing_approaches}, approach this either by inferring the user's current goal from their actions and planning toward it~\cite{puig_nopa_2023, puig2020watch, zhu2026proact}, or by predicting the user's future actions directly from routine patterns learned from their observed history~\cite{patel_stot_2023, patel_slatepro_2023, bartoli_streak_2025}.
Unlike the previous levels, which support only a single fixed goal, this level lets the robot act toward any future goal $g_{t:\infty}$.

Among these levels, only \textit{unprompted proactive assistance} fully relieves the user of the cognitive burden of delegating tasks, and is the focus of the rest of this paper.




\begin{figure*}[t]
  \centering
  \begin{minipage}[t]{0.66\linewidth}
  \vspace{0pt}
    \centering
    \captionof{table}{\small{Open problems in unprompted proactive assistance, grouped into grounding, anticipation, and planning, with inputs, outputs, and prior works.}}
    \label{tab:open-problems}
    \vspace{2pt}
    \scriptsize
    \setlength{\tabcolsep}{2pt}
    {
    \renewcommand{\arraystretch}{1.1}
    \begin{tabular}{@{}c 
                         >{\centering\arraybackslash}p{0.5cm}
                         >{\raggedright\arraybackslash}p{3.0cm}
                         >{\centering\arraybackslash}p{2.2cm}
                         >{\centering\arraybackslash}p{1.3cm}
                         >{\raggedright\arraybackslash}p{3.8cm}@{}}
  \toprule
   &  &  \textbf{Problem} & \textbf{Input} & \textbf{Output}
   & \textbf{Prior Works} \\
  \midrule
  \multirow{2}{*}{\rotatebox[origin=c]{90}{\tiny\textbf{Grounding}}}
  & G1 & Goal-space representation
     & $\xi_{0:t}$
     & $\mathcal{G}$
     & - \\
   \cmidrule(l){2-6}
  & G2 & Action-goal association
     & $\xi_{0:t},\ \mathcal{G}$
     & $g_{0:t}$
     & \cite{Nakahashi2015ModelingHU,ZhiXuan2020OnlineBG,puig2020watch,Baker2009ActionUAA} \\
  \midrule
  \multirow{3}{*}{\rotatebox[origin=c]{90}{\tiny\textbf{Anticipation}}}
  & A1 & Human policy learning
     & $\xi_{0:t},\ \mathcal{G},\ g_{0:t}$
     & $\pi^h$
     & \cite{Nikolaidis2014EfficientML,Jayanthi_strategy_2022} \\
   \cmidrule(l){2-6}
  & A2 & Goal inference
     & $\xi_{t-\Delta t:t},\ \mathcal{G},\ \pi^h$
     & $g_t$
     & \cite{Ramrez2010ProbabilisticPR,puig_nopa_2023,Lynch2019LearningLPA,zhu2026proact} \\
   \cmidrule(l){2-6}
  & A3 & Goal anticipation
     & $\xi_{0:t},\ \mathcal{G},\ g_{0:t}$
     & $g_{t:t+\Delta t}$ or $\phi$
     & \cite{patel_stot_2023,patel_slatepro_2023,bartoli_streak_2025} \\
  \midrule
  \multirow{3}{*}{\rotatebox[origin=c]{90}{\hspace{2mm} \tiny\textbf{Planning}}}
  & P1 & Cost modeling
     & $s_t,\ a^r_t,\ a^h_t$
     & $c_h(a^h_t)$ 
     &
     \cite{smith2020assessing, patel2025taaco,dogan_grace_2025,banerjee2018robot} \\
   \cmidrule(l){2-6}
  & P2 & Coordinated action selection
     & $\xi_{0:t},\ c_h(.),\ \pi^h,$ $g_{t:t+\Delta t}$
     & $a_t^r$
     & \cite{hadfield2016cooperative} \\
  \bottomrule
  \end{tabular}
  }
  \end{minipage}\hfill
  \begin{minipage}[t]{0.335\linewidth}
  \vspace{0pt}
    \centering
    \includegraphics[height=1.7in]{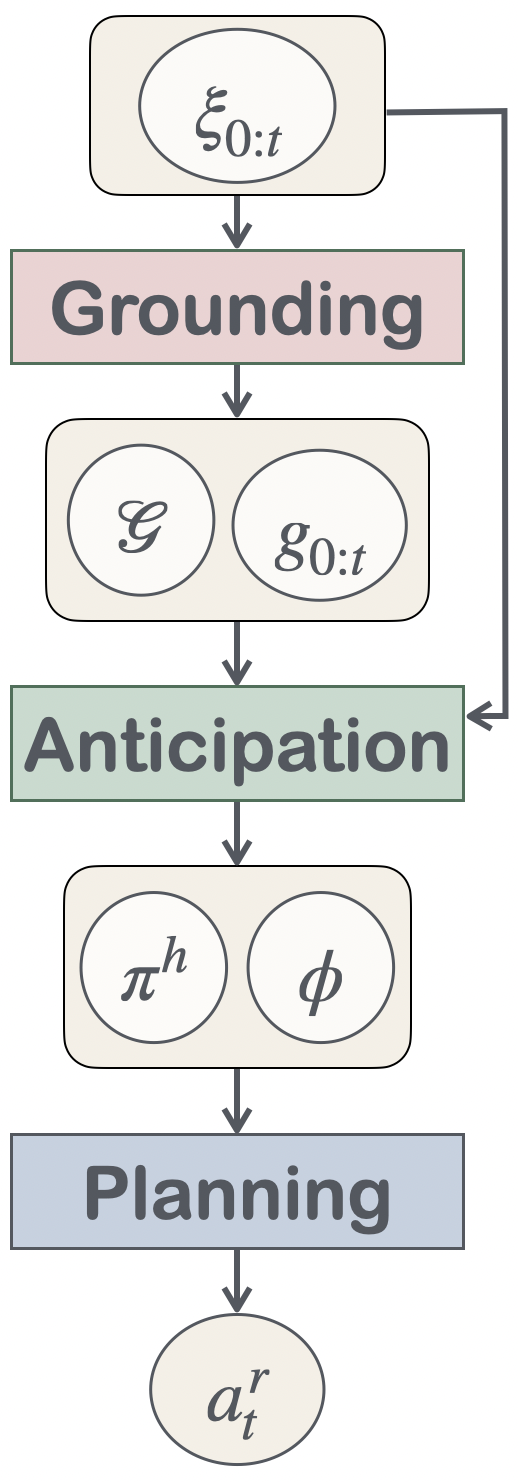}
    \vspace{-2mm}
    \captionof{figure}{\small{Dependency structure of sub-problems and how they build on each other}}
    \label{fig:subproblems}
  \end{minipage}
\vspace{-5mm}
\end{figure*}

\section{Algorithmic Framework for L2 Proactivity}
\label{sec:sub_problems_framework}

In L2, unprompted proactive assistance, the policy takes the form $a^r_t = \pi^r(s_t, \xi_{0:t-1})$, conditioning each robot action on the current state and the full history. To address this, we propose a framework composed of sub-problems, and show how existing works address a scoped version of the problem.

\subsection{Sub-Problems in Unprompted Proactive Assistance}

We decompose Unprompted Proactive Assistance into three high-level steps (Fig.\ref{fig:subproblems}): \textit{grounding} models the intentional structure behind user actions, \textit{anticipation} reasons over temporal  user behaviors to provide an anticipated need, goal, or action, and \textit{planning} predicts robot actions by optimizing towards an anticipated variable. 
Each of these challenges comprises several sub-problems, shown in Table~\ref{tab:open-problems}, which facilitate unprompted proactivity\footnote{Solving every sub-problem is not a prerequisite for effective proactive behavior, as the existing approaches in Sec.~\ref{sec:existing_approaches} show.}. 
No single method solves unprompted proactive assistance end to end. We discuss works that are most relevant  here and provide a more detailed literature review in Appendix~\ref{app:lit}.

\subsubsection{Grounding}
The robot obtains a flat sequence of user actions and environment states $\xi_{0:T}$, from which it must infer the intentional structure underlying user behavior by jointly identifying the set of user goals or tasks and associating each observed action with its corresponding goal.
The robot must first construct a \textbf{goal-space representation (G1)} that curates the set of goals $\mathcal{G}$ over which it reasons.
Since a proactive robot defines its own goals rather than receiving them, the notion of a \textit{goal} is under-defined and may span abstraction levels (pick apple vs. make breakfast). The chosen space should be hierarchical or coarse enough to model intent yet fine enough to separate variations. No prior work curates a goal space autonomously.
Given a goal space, \textbf{action-goal association (G2)} assigns observed actions to goals, yielding $g_{0:t}$. 
The core challenge is disentangling interleaved actions across concurrent goals and isolating extraneous ones. Prior work infers this structure with symbolic~\cite{puig2020watch,Baker2009ActionUAA} and learned~\cite{Nakahashi2015ModelingHU,ZhiXuan2020OnlineBG} models.

\subsubsection{Anticipation}
Given the past behavior observations $\xi_{0:t}$ and their corresponding goals $g_{0:t}$, the anticipation objective is to learn the function governing the user's current and future goals $g_{t+\Delta t}$ and the policy $\pi^h$ by which they act towards a given goal. 
This requires \textbf{human policy learning (A1)} to learn the user's goal-oriented policy $\pi^h$ from observed goal-conditioned actions. 
The core challenge is 
adapting based on limited user-specific data. Prior works address data sparsity by clustering users or fitting mixtures~\cite{Nikolaidis2014EfficientML,Jayanthi_strategy_2022}, but rely on explicit demonstrations.
To determine what the user is currently trying to accomplish, \textbf{goal inference (A2)} infers the user's current goal $g_t$ online from recent actions.
The challenge is disambiguating among many goals and providing assistance despite uncertainty over goals. Prior works form a belief over goals with actions as likelihoods, addressed through 
symbolic~\cite{puig_nopa_2023,Ramrez2010ProbabilisticPR} 
learned~\cite{zhu2026proact}
, and hybrid~\cite{Lynch2019LearningLPA} 
methods.
Looking beyond the current goal, \textbf{goal anticipation (A3)} predicts future goals $g_{t:t+\Delta t}$ that the user has not yet begun, via the temporal generation process $\phi$. 
While the hidden state $h_t$ driving the temporal goal generation is inaccessible to the robot, it can marginalize over $h_t$ and model the resulting goal generation process as $g_t = \phi(g_{0:t-1}, s_{0:t-1})$, a function of state and goal history. The core challenge is learning from wider context, such as habitual behavior patterns. Closest works model state evolution as a temporal sequence over scene graphs with Graph Neural Networks~\cite{patel_stot_2023, patel_slatepro_2023, bartoli_streak_2025}. 

\subsubsection{Planning}

Given the ability to predict future rollouts from $\phi: g_{0:t-1},s_{0:t-1} \rightarrow g_t$ and $\pi^h: s_t, g_t \rightarrow a^h_t$, we model planning as $\pi^r = \arg\max_{\pi^r} \mathbb{E}_{{\xi}^{\pi^r}}[\Delta V ({\xi}^{\pi^r}) - \Delta C ({\xi}^{\pi^r})]$, maximizing expected net value change over the anticipated future, with $\Delta V$ and $\Delta C$ as defined in Sec.~\ref{sec:problem_formulation}.
Evaluating candidate plans requires \textbf{cost modeling (P1)} to learn the user's cost model $c_h(a^h_t)$, spanning the physical effort of an action, the user's preference for doing it, the overhead of interacting with the robot, and/or the disturbance of getting in the way.
The challenge is that these components are personal, context-dependent, and depend on the robot's own concurrent actions, making them hard to quantify. Prior work models effort for task allocation~\cite{smith2020assessing},
learns assistive preferences~\cite{patel2025taaco,dogan_grace_2025}, 
and estimates interruptibility~\cite{banerjee2018robot}, but each captures only part of the cost.
Finally, \textbf{coordinated action selection (P2)} creates a policy to predict robot action $a^r_t$ that minimizes user cost, based on the learned cost $c_h$, human policy $\pi^h$, and anticipated goals $g_{t:t+\Delta t}$. The core challenge is optimizing against a learned, adaptive human rather than a ground-truth or scripted one. Moreover, the coordination granularity could vary, requiring finer coordination when acting on the current goal and coarser when preparing for a future one. Closest works use cooperative inverse RL~\cite{hadfield2016cooperative}
, but are limited to simulated domains with known rewards.

\smallskip
Several themes span the above sub-problems, including joint optimization across sub-problems to improve over solving each alone, personalization to learn user-specific patterns, continual learning to adapt to drifting needs without forgetting, and user interaction to resolve uncertainty and answer user queries.

\subsection{Existing Integrated Solutions for L2 Proactivity}
\label{sec:existing_approaches}

Two primary families of vertically integrated approaches have emerged for unprompted assistance, but each addresses only a portion of the sub-problems outlined in Sec.~\ref{sec:sub_problems_framework}.

The first family~\cite{puig_nopa_2023, puig2020watch, zhu2026proact} performs online \textbf{goal inference} followed by goal-conditioned planning, addressing the goal-inference sub-problem while assuming the goal space, action-goal association, and human policy are given rather than learned, and avoiding prediction of future goals entirely. This scoping leads to three limitations that our work addresses. First, these methods require the full set of possible goals along with ground-truth goal annotations for past actions, which would demand intensive labeling or user feedback. Second, they assist only with the current task, and only once the belief concentrates, which in a large goal space can take too long to leave any time to act. 
Third, they ignore habits and preferred variants, since under a universal goal distribution an 8am coffee cannot be predicted from the time of day, and the preferred cream-and-sugar variant cannot be anticipated before the user adds it. The next family addresses this by modeling habits and variation.

The second family leverages observed history for direct \textbf{temporal user action prediction}~\cite{patel_stot_2023, patel_slatepro_2023, bartoli_streak_2025, saavedra2026perceptua} and imitates predicted actions, bypassing explicit goal inference. These methods learn what the user tends to do next from temporal patterns, and at deployment the predicted action becomes the robot's action. In our formalism, this can be interpreted as selecting a goal space at the finest granularity such that it coincides with the action space, or equivalently as a marginalization over the goal $g$ in addition to the hidden context $h$ to directly learn an action predictor $a^h_t = \phi(a^h_{0:t-1}, s_{0:t-1})$. These approaches therefore skip explicit goal inference and human-policy learning. They predict minutes to hours ahead, eliminating the need for close collaboration, and directly act on the predictions, implicitly assuming the cost function to be uniform across actions that the human performs. The core limitation of this family of approaches is complementary to that of goal inference, since operating at the level of actions discards the intentional structure of behavior that a goal-level model can exploit.
\section{L2 Evaluation Setup}
\label{sec:eval_setup}

Developing and evaluating unprompted proactivity requires an interactive setting with a realistic, adaptable human $\pi^h(s_t, g_t)$ and a metric for robot effectiveness $\Delta V - \Delta C$. Because studying proactive behavior with real humans is costly and risky, especially given the need for long-term interaction, prior work has relied on static datasets~\cite{puig2020watch, patel_stot_2023}. 
By design, proactive robot actions affect the environment state $s_t$, altering the user's future actions $\pi^h(s_t, g_t)$ and value of different goals $v(g_t, s_t, h_t)$, and resulting in cascading effects on the robot's own future actions $\pi^r(s_t, \xi_{0:t})$.
Evaluating on static data cannot capture these intervention-induced changes. It evaluates the robot against a fixed sequence induced by human behavior alone $\xi^h=s_0,a^h_0,s_1,a^h_1,...$, thereby missing cascading effects of robot actions on both the user and the robot and potentially overstating real-world performance. 

Specifically, offline evaluation scores a robot policy by replaying a fixed record of human behavior and evaluating each robot action against that static trajectory, resting on two assumptions: 1) that the state distribution the robot sees stays the same as training, and 2) that the human's goals are fixed. As the human and robot co-adapt, both assumptions fail, and the \textbf{policy that appears optimal under those assumptions no longer is}.
First, when a robot action does not match what the user would have done, the state drifts and the robot errs more often, degrading to erring at every step in finite time, a compounding cost offline evaluation never incurs since it scores actions only over dataset states. Second, effective assistance changes what the user does next, since a value-maximizing user freed from one task reallocates to a higher-value goal, so the offline-optimal policy is strictly suboptimal against the goals the user actually adopts. We derive both gaps formally in Appendix~\ref{app:misalignment}. In this section, we present our closed-loop evaluation setup and empirically show that existing methods lose substantial performance relative to the offline setting.

\subsection{Closed Loop Evaluation Setup}
\label{sec:llm_eval}

Closed-loop evaluation, unlike an offline setup, captures both action-induced distribution shift and the human's adaptation to the robot (purple arrows in Fig.~\ref{fig:system}). We contribute the first such closed-loop setup, modeling human adaptation with an LLM and metrics capturing the effect of robot actions on user effort. To create this setup, we adopt the action space of object relocation similar to prior works~\cite{puig_nopa_2023, patel_slatepro_2023}.


\subsubsection{Human Model}

We build a closed-loop simulation in VirtualHome~\cite{puig2018virtualhome}, driving user behavior with the HOMER+ dataset~\cite{patel_slatepro_2023} and modeling the user's adaptation to the robot with an LLM (gpt-4o-mini). HOMER+ provides realistic behaviors across three simulated households over several weeks $\xi^h = s_0, a^h_0, ...$, sampled from crowdsourced routine patterns and executable in VirtualHome. The simulated user agent executes the action sequences from HOMER+, skipping actions the robot already performed, relocating an object from its robot-induced location when it is not where expected, and otherwise executing each action unchanged. 

To enable adaptation to new states produced by robot actions, we create two user adaptation models $\pi^h(s_t,g_t)$. The first one uses an LLM as a proxy to determine the change in user actions beyond $\xi^h$. For instance, if the robot moved a cereal bowl to the counter, and the user planned on putting it on the dining table, the LLM can choose to leave it there and move it later or just make cereal on the counter. We give the LLM examples of how to judge robot actions and the day's context, which includes a timestamped list of past user and robot actions $\xi_{0:t}$, with each user action annotated as task-related or corrective, and future user actions based on HOMER+ $\xi^h_{t:t+\Delta t}$, over a fixed predictive window $\Delta t$, set to 2 hours (full prompt in Appendix~\ref{app:prompts}). The LLM is queried for all robot actions $a^r_t$ except when the robot action directly matches the next user action for that object within the predictive window. The LLM assigns the action to one of three responses: accept it, undo it by returning the object, or redirect it to where the user actually needs it.
The second is a scripted user model, which does not explicitly respond to the robot. The scripted user model would pick up the cereal bowl from where the robot left it, and bring it to the dining table at the usual time. While the LLM-based model responds in a more natural manner, it also introduces more stochasticity compared to the scripted model.


\subsubsection{Metrics}
To measure value generated by the robot $\Delta V - \Delta C$, we compare against the actions from HOMER+, which the user would have taken in the robot's absence as the counterfactual $\xi' = \xi^h$. We categorize human actions in the baseline sequence into: \textit{i)} \textit{Skipped} action, indicating user effort saved by a correct robot action ($\Delta C = - c_h(a^h)$), \textit{ii)} \textit{Done as-is} action, unaffected by the robot ($\Delta C = 0$), and \textit{iii)} \textit{Corrected} action, indicating a perturbed action $\tilde{a}^h$ due to a wrong robot action ($\Delta C = c_h(\tilde{a}^h) - c_h(a^h)$), requiring the user to move the object from where the robot placed it. Separately, a wrong robot action can also cause the user to \textit{undo} it through an additional action ($\Delta C = c_h(a^h)$), adding a new user action absent from the baseline. In both human models the goal distribution stays fixed, apart from the LLM user occasionally acting on a future goal sooner, hence $\Delta V \approx 0$. Quantifying each action's cost exactly would require additional information, so we assume a uniform cost per action, except for perturbations, where recovering from an unexpected robot action carries an added cost (e.g. searching for a cereal bowl the robot shelved in the wrong place). 

Under this model, the robot increases net value by reducing the number of net user actions, by $|a^h| * c_h(a^h)$, but it can also reduce value by perturbing user actions, by $|\tilde{a}^h| * (c_h(\tilde{a}^h) - c_h(a^h))$. Because the relative cost of a perturbation is task- and user-dependent, instead of combining these into a single number, we report both components separately: \textbf{net actions saved}, as the difference between \textit{skipped} actions and \textit{undo} actions, and \textbf{perturbed actions}, as the \textit{corrected} actions.
We report both metrics as a percentage of baseline user actions. Finally,
we report the \textbf{F1 score} to enable comparison with prior evaluations, using skipped as true positives, undo and corrected as false positives, and done as-is as false negatives.
 

\begin{figure*}[t]
    \centering
    \includegraphics[width=0.78\linewidth]{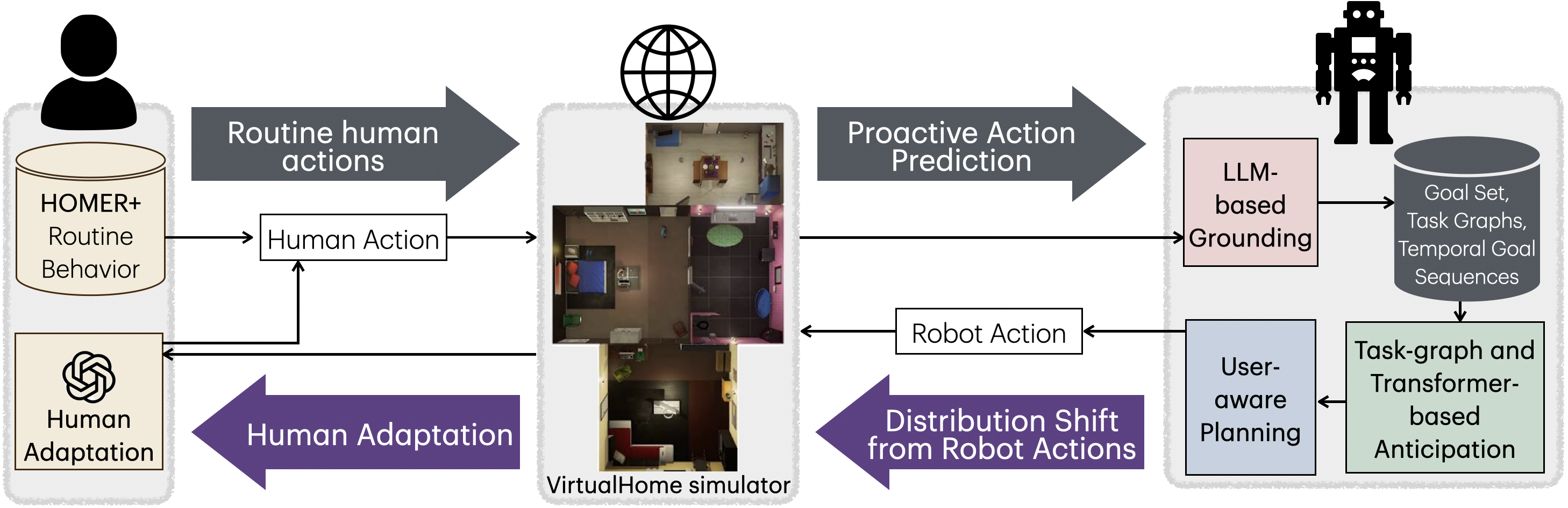}
    \setlength{\abovecaptionskip}{2pt}
    \setlength{\belowcaptionskip}{-16pt}
    \caption{\small{Our proactivity method, Grounding, Anticipation, and Planning (GAP), in a closed loop evaluation setup. Closing the evaluation loop accounts for effects (purple arrows), in addition to those that open-loop does (grey arrows)}}
    \label{fig:system}
\end{figure*}

\subsection{Empirical Investigation of Evaluation Misalignment}

We evaluate prior methods, SLaTe-PRO~\cite{patel_slatepro_2023} and STREAK~\cite{bartoli_streak_2025} on our closed-loop setup and compare against the offline evaluation on static datasets used in the published works. In the closed-loop setup, we execute each model's robot actions to advance the state, and retrain each model every five days on all past data, now containing the joint user and robot action sequences. Table~\ref{tab:offline_vs_closedloop} compares F1 scores between the offline and closed loop settings, and shows that offline evaluation drastically overstates performance for both methods. Both score above 0.6 offline on the HOMER+ dataset, but collapse in the closed-loop setup, especially under the more stochastic LLM human.

\begin{table}[h]
\centering
\caption{\small{Offline versus closed-loop F1 scores for existing methods, illustrating the misalignment between the evaluation methods.}}
\label{tab:offline_vs_closedloop}
\renewcommand{\arraystretch}{1.0}
\begin{tabular}{lccc}
\toprule
 & & \multicolumn{2}{c}{Closed-Loop} \\
\cmidrule(lr){3-4}
Method & Offline & LLM User & Scripted User \\
\midrule
SLaTe-PRO & 0.634 & 0.216 & 0.264 \\
STREAK    & 0.677 &   0.002    &  0.296     \\
\bottomrule
\end{tabular}
\vspace{-6mm}
\end{table}

\section{Adaptive Unprompted Proactivity Model}
\label{sec:method}



\label{sec:model_design}

We contribute a proactive robot model, GAP, by instantiating the Grounding, Anticipation, and Planning sub-problems of Sec.~\ref{sec:sub_problems_framework} as a single pipeline that learns from past observations (Fig.\ref{fig:system}). A language model curates a goal space from raw activity logs (G1, G2), per-goal task graphs and a particle filter learn and infer the user's goals online (A1, A2), a transformer predicts upcoming goals (A3), and a planner turns these anticipations into candidate robot actions (P2) and selects among them by task-graph preconditions, timing, and user acceptance (P1).

We use an LLM (gpt-4o-mini) to curate an open-vocabulary set of goals (G1). The curator maintains a vocabulary of goals, with each entry containing a name (e.g.\ \texttt{cereal\_or\_oatmeal\_breakfast}), a natural-language description (e.g.\ \textit{preparing and eating a cereal or oatmeal breakfast}), the first day the goal was observed, and its observation count. As observations arrive, the curator can create new goals and modify or merge existing ones.
Alongside curation, the LLM tags each day's past action observations with a goal from the vocabulary (G2). For each day it generates goal intervals with start and end times, assigning the scene-graph transitions in each interval to the labeled goal. A deterministic repair step sanitizes the proposed segments by dropping those with invalid times, either outside the bounds of the action sequence or of negative duration. To avoid forcing arbitrary assignments, the LLM may label a small fraction of ambiguous actions unassigned.

For each observed goal, we learn a directed task graph capturing how the user tends to perform it (A1) and at deployment, we use a particle filter over task graphs to perform online goal inference (A2). Each node is a movement of an object to a destination, recording its visit count and average timing offset from the goal's start. Edge weights are transition probabilities, estimated from observed node-to-node transitions and normalized over each node's outgoing edges, with Laplace smoothing to avoid extreme values where evidence is sparse.
Each particle maintains a set of open goal hypotheses, the last completed step of each, and the day's goal history. Every user action results in a likelihood of that action aggregated over three possibilities: (1) continuing an open goal, scored by the highest-probability path from the last completed step to the observed node; (2) opening a new goal, scored by that goal's time-based prior and the observed node's path from the start node; and (3) an \textit{unclear} action, with fixed probability $\epsilon = 0.001$. We advance particle state by sampling one possibility per particle for user actions, and stochastically marking the corresponding node as satisfied for robot actions. If the user exactly undoes a robot placement, particles that had marked that node as robot-satisfied are down-weighted by $\epsilon/(\epsilon+p_{node})$, where $p_{node}$ is the marginal probability of that node in the task graph. 
To maintain particle health, we apply a temporal decay to each open goal to account for unmatched observations, close a goal when it reaches its end node or goes stale from missing observations, and resample whenever the effective sample size drops below half the particles.

While the particle filter recognizes the user's current goal, to anticipate goals that have not yet begun, we train a causal transformer on the sequence of curated goal segments (A3). We represent each goal segment by a learned projection of the LLM embedding of the goal name together with its start, end, and duration, with start and end times encoded by sinusoidal functions as in the baselines~\cite{patel_slatepro_2023, bartoli_streak_2025}. Two heads process the last-timestep output to predict the next goal and its expected start time. The first scores each goal by a linear projection followed by dot-product similarity against the goal embeddings, and the second predicts a goal's start time by concatenating the output with each goal embedding, projecting, and comparing against sinusoidal time embeddings at ten-minute intervals. Hyperparameters and fixed constants are listed in Appendix~\ref{app:implementation}.

\begin{table*}[t]
\centering
\setlength{\abovecaptionskip}{3pt}
\caption{\small{Closed-loop performance under the LLM and scripted user, with standard deviation intervals between households. 
}}

\label{tab:results}
\setlength{\aboverulesep}{0pt}
\setlength{\belowrulesep}{0pt}
\renewcommand{\arraystretch}{1.0}
\begin{tabular}{l|rrr|rrr}
\toprule
   & \multicolumn{3}{c|}{LLM User} & \multicolumn{3}{c}{Scripted User} \\
  \cmidrule(lr){2-4} \cmidrule(lr){5-7}
  Method & Net Saved (\%) $\uparrow$ & Perturbed (\%) $\downarrow$ & F1 $\uparrow$ & Net Saved (\%) $\uparrow$ & Perturbed (\%) $\downarrow$ & F1
  $\uparrow$ \\
  \midrule
  SLaTe-PRO          & 2.1 $\pm$ 7.5 & 18.1 $\pm$ 3.1 & 0.216 $\pm$ 0.034 & 16.6 $\pm$ 3.0 & 11.8 $\pm$ 1.4 & 0.264 $\pm$ 0.043 \\
  STREAK             & -159.9 $\pm$ 50.3 & \textbf{10.6 $\pm$ 10.8} & 0.002 $\pm$ 0.001 & 20.3 $\pm$ 7.3 & 48.3 $\pm$ 19.0 & 0.296 $\pm$ 0.105 \\
  Plain LLM          & -174.5 $\pm$ 26.5 & 62.1 $\pm$ 4.5 & 0.076 $\pm$ 0.022 & 19.5 $\pm$ 4.8 & 25.2 $\pm$ 0.8 & 0.310 $\pm$ 0.064 \\
  \ourMethod\ (ours) & \textbf{17.7 $\pm$ 1.1} & 11.7 $\pm$ 1.9 & \textbf{0.388 $\pm$ 0.007} & \textbf{30.8 $\pm$ 1.8} & \textbf{10.1 $\pm$ 0.8} &
  \textbf{0.452 $\pm$ 0.024} \\
  \midrule
  Oracle             & 44.4 $\pm$ 1.6 & 8.2 $\pm$ 1.1 & 0.617 $\pm$ 0.015 & 51.3 $\pm$ 0.0 & 8.8 $\pm$ 1.9 & 0.651 $\pm$ 0.006 \\
  \bottomrule
\end{tabular}
\vspace{-6mm}
\end{table*}

The planner proposes candidate object placements from the inferred current goal and the predicted future goals (P2). For the current goal, it anchors the task graph to absolute time using the last observed user action, then predicts upcoming actions with their expected times and confidence probabilities. For an upcoming goal, it uses the actions and confidence predicted by the transformer. From both sources, the planner proposes the candidate actions whose confidence exceeds a tunable threshold  $c_{\mathrm{thresh}}$.
We score each candidate action by a heuristic utility that captures the timing, preference, and sequential structure of actions (P1), which is combined with the action confidence and measures helpfulness or cost reduction. The first condition checks that the action's path over the task graph contains no unsatisfied precondition actions on that object, since acting before a precondition is met yields no benefit. The second condition requires the action's expected time of occurrence to fall within a per-action lead-time threshold, learned by starting from an initial value and adjusting over time based on successful and reverted placements. The third factor scales the utility by a learned per-object-destination acceptance estimate, estimated from how often the user builds upon the robot's placement rather than undoing it, with a short within-day backoff that skips a placement the user just rejected.

To track the shifting distribution, we recency-weight the task-graph counts for node observations, robot-actuated movements, and transitions by a parameter $\gamma$, fitted during each retraining to best explain the newest observations. We recency-bias the transformer's training data using the same $\gamma$. We retrain all models every $n_{retrain}$ days, set to five in all our experiments, adapting on all past data every five days.

\section{Results}

\label{sec:eval}






We evaluate our model \ourMethod\ (Sec.~\ref{sec:method}) in the closed-loop setup (Sec.~\ref{sec:llm_eval}), reporting the net actions saved, perturbed user actions, and F1 score.

\textbf{Baselines:} We compare \ourMethod\ against four baselines: 
\begin{flushitemize}
\item \textit{SLaTe-PRO}~\cite{patel_slatepro_2023} autoregressively predicts the next state iteratively, and executes predicted user relocations over a predictive window above a confidence threshold.

 \item  \textit{STREAK}\footnote{To avoid divergent, unreportable behavior, we cap STREAK for LLM-user at two actions per timestep and forbid placing a node on its child (in the dataset) to prevent impossible user actions.}~\cite{bartoli_streak_2025} predicts the next state using a Markovian model and uses continual learning to regularize successive training rounds.

\item \textit{Plain LLM} prompts gpt-4o-mini at each timestep to predict the user's next placements, given up to the 14 most recent days of activity and the moves seen so far that day.

\item \textit{Oracle} is an upper bound that knows the ground-truth future relocations and acts exactly one tick ahead. It marks the ceiling on the assistance achievable, as the robot cannot preempt multiple moves that fall within the same timestep.
\end{flushitemize}

Table~\ref{tab:results} reports the mean and standard deviation across households for each metric. \textbf{\ourMethod\ outperforms all non-oracle baselines} across every metric under the closed-loop setup, with the sole exception of perturbed actions under the LLM user, where STREAK reports fewer. Net actions saved and perturbed actions measure two components of the robot's net value, so they must be read together. STREAK's low perturbed count comes with drastically worse net actions saved (a net -159.9\% under the LLM user), so its low perturbation does not reflect better assistance. The low performance of Plain LLM baseline shows that without the structured grounding, anticipation and planning framework, the commonsense priors in an LLM fail to translate into well-timed, personalized assistance. Thus, GAP's performance cannot be attributed to the LLM used in the grounding step alone. Finally, the oracle result helps contextualize the results of all methods by establishing the ceiling of assistance for a proactive robot in this setting, since the robot cannot assist with actions that happen in quick succession.

The \textbf{baselines SLaTe-PRO and STREAK score poorly because they are built for an open-loop setup}. When robot actions shift the distribution, they do not have a mechanism to cope with the changes, suffering from the misalignment described in Sec.~\ref{sec:eval_setup}. This effect is exacerbated in the LLM user, which explicitly adapts its actions to the robot, compared to the scripted user. GAP handles the closed loop effects of robot actions in two ways. 
First, it decomposes reasoning into goal- and action-level modeling. User adaptation within the same goal does not affect the goal-level model, and at the action level the task graphs are not sensitive to the robot performing an action, enabling GAP to adapt its remaining actions within the goal. Second, its heuristic planner learns directly from interaction over time, through the lead-time and acceptance conditions, and responds to the user by not repeating rejected actions.

The standard deviations reveal that \textbf{the baselines fail by destabilizing and compounding their own errors}, rather than by settling into a stable, mildly suboptimal policy. This form of failure is severe and erratic, with mean net actions saved falling to -159.9\% and -174.5\% for STREAK and Plain LLM respectively under the LLM user, with standard deviations as high as 50.3. \ourMethod, by contrast, remains stable and strongly positive (17.7\% $\pm$ 1.1 and 30.8\% $\pm$ 1.8), with the tightest intervals of any non-oracle method, indicating assistance that is consistent across households rather than averaging over runs that range from helpful to harmful.

Even though STREAK is built to handle distribution shift in user actions, we find that the \textbf{continual learning alone does not address the self-induced distribution shift} in the closed loop setup, which creates a feedback loop. One way to break this loop is to train only on user actions and the robot actions the user accepts, discarding rejected ones.  We find this drops performance, reducing net actions saved by 14.3\% points, and raising perturbed actions by 19.1\% points on the scripted user. We posit that this is because ignoring wrong robot actions moves the training distribution farther away from the real closed-loop distribution seen at test time.

\textbf{GAP balances reliable current goal-focused help against harder but valuable assistance towards future goals.} We separate \ourMethod's actions based on their two sources: preparation for a future goal (from A3) and assistance toward the current goal (from A2). We find that future goal assistance accounts for 32.8\% of robot actions with the scripted user and 25.3\% with the LLM user. Anticipating a future goal is the harder of the two, because the robot must first predict which goal will occur before predicting its actions. Accordingly, precision is lower for future-goal assistance than current-goal assistance (0.64 vs 0.80 scripted, 0.53 vs 0.62 LLM). GAP draws on both sources, leaning on current-goal actions for reliable help while also acting on the less certain future goals to expand the assistance it can offer.

\section{Conclusion and Limitations}
We formalize proactivity in robotics as three levels, and for the highest, unprompted proactivity, we identify a gap in how prior work is evaluated offline. We close this gap with an interactive closed-loop setup, decompose the problem into its component subproblems, and propose a method that instantiates them and outperforms prior approaches under closed-loop evaluation.

Our formalism maps the open problems of unprompted proactivity, and our setup and method address a subset of them. Our user-adaptation model is limited to the goals already present in HOMER+, because true goal adaptation requires a real-world, study-based understanding and modeling of how users adapt to robots over longitudinal time frames. Our user adaptation model relies on an LLM proxy, which has been supported by prior work~\cite{zhang2023large}, but real user studies should validate its use in this setting. We also assume uniform action costs, leaving a full model of cost and value, along with stronger methods for each subproblem, to future work. We hope our formalism and evaluation give these efforts a common foundation.

\bibliographystyle{ieeetr}
\bibliography{references}

\appendices
\section{Misalignment between offline and closed-loop evaluation}
\label{app:misalignment}

Offline evaluation scores a robot policy by replaying a fixed record of human behavior and charging each robot action against that static trajectory. This rests on two sequential assumptions. First, that the robot never leaves the distribution of states seen in the data. Second, that the human's goals are fixed, so the robot is only ever evaluated against the goals it was recorded alongside. Both assumptions fail once the loop is closed, as both the human and the robot co-adapt. Relaxing each assumption causes a gap between offline and closed-loop performance.

\subsection{Relaxation 1: Drift from Robot Errors Compounds Cost}
\label{sec:relaxation_drift}
The first relaxation concerns the human's response to robot mistakes. An offline evaluation maintains the state progression of the human-only baseline. In reality the state drifts from robot actions, making it more prone to further mistakes. The consequence is a compounding penalty that grows, similar to the drift in imitation learning~\citeapp{dagger}.

We model a rollout of horizon $T$. At each step the robot takes an erroneous action that deviates the environment state from the intended one, with probability $\epsilon_t$. An error pushes the robot further off its training distribution, raising its future error rate; we assume that given the rate $\epsilon$, the occurrence of errors is independent with a Bernoulli distribution, and this degradation is additive, so that the degraded error rate is $\epsilon_t + k$, saturating at 1.
The robot is fully degraded, resulting in an error at every step, once $\epsilon_t$ reaches $1$, which requires $S^\text{deg}$ errors.
\begin{equation}
S^\text{deg} = \frac{1 - \epsilon_0}{k}
\end{equation}

\begin{theorem}
\label{thm:drift}
For any $\delta \in (0,1)$, with probability at least $1 - \delta$ the robot errs at every step from time $t^{deg}$ onward, where
\begin{equation}
t^\text{deg} = \left(\frac{\sqrt{L/2} + \sqrt{L/2 + 4 \epsilon_0 S^\text{deg}}}{2 \epsilon_0 }\right)^2 .
\end{equation}
where $L = \ln\tfrac{1}{\delta}$, and $S^\text{deg} = (1-\epsilon_0)/k$
\end{theorem}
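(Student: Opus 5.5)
The plan is to reduce the claim to a lower-tail concentration bound on the number of errors made by time $t$. Let $N_t$ be the number of erroneous steps among the first $t$ steps. Under the model of Relaxation~1, the error probability at each step is $\epsilon_0 + k N$, saturating at $1$, where $N$ is the number of errors so far. So once $N_t \ge S^\text{deg} = (1-\epsilon_0)/k$, the rate is $1$ and the robot errs at every later step deterministically. The theorem therefore follows if we show $\Pr[N_{t^\text{deg}} < S^\text{deg}] \le \delta$. Since $N_t$ is an integer, $N_t\ge S^\text{deg}$ is equivalent to $N_t \ge \lceil S^\text{deg}\rceil$, so a non-integer $S^\text{deg}$ causes no difficulty. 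Because $N_t$ is nondecreasing in $t$, the event at time $t^\text{deg}$ also covers every integer step at or after $t^\text{deg}$.

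The first step is a coupling that removes the dependence between steps. Draw i.i.d.\ uniforms $U_1,U_2,\dots$ on $[0,1]$ and declare step $s$ erroneous iff $U_s < \epsilon_s$, where $\epsilon_s \ge \epsilon_0$ is the current, history-dependent rate. This reproduces the model's conditional Bernoulli errors. Define $X_t = \#\{s\le t : U_s < \epsilon_0\}$, so that $X_t \sim \mathrm{Binomial}(t,\epsilon_0)$. Since $\epsilon_s \ge \epsilon_0$ pathwise, we have $N_t \ge X_t$ pathwise, and hence $\Pr[N_t < S^\text{deg}] \le \Pr[X_t < S^\text{deg}]$.

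The second step is to apply Hoeffding's lower-tail inequality, $\Pr[X_t \le \epsilon_0 t - a] \le \exp(-2a^2/t)$. Choose $a = \sqrt{tL/2}$ with $L=\ln(1/\delta)$, so that the right-hand side equals $\delta$. It then remains to pick $t$ with $\epsilon_0 t - \sqrt{tL/2} \ge S^\text{deg}$. Write $u=\sqrt t$; the condition becomes the quadratic inequality $\epsilon_0 u^2 - \sqrt{L/2}\,u - S^\text{deg} \ge 0$. Its positive root is
\[
u^\star = \frac{\sqrt{L/2} + \sqrt{L/2 + 4\epsilon_0 S^\text{deg}}}{2\epsilon_0},
\]
and $(u^\star)^2$ is exactly the stated $t^\text{deg}$. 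The left-hand side is increasing in $u$ for $u\ge u^\star$, so every $t\ge t^\text{deg}$ works. In particular, for the first integer step $t\ge t^\text{deg}$ we get $\Pr[N_t<S^\text{deg}]\le\delta$.

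The main obstacle is justifying the coupling rigorously. The rates $\epsilon_s$ depend on the past, so $N_t$ is not itself a sum of independent variables, and concentration cannot be applied to it directly. The uniform-variable construction handles this by putting everything on one probability space where domination holds pathwise. If a referee objects to it, the fallback is a supermartingale (Azuma) argument on $N_t - \epsilon_0 t$, which yields the same $\exp(-2a^2/t)$ tail. The remaining details are minor: rounding $t^\text{deg}$ up to an integer step, and the degenerate case $\epsilon_0=0$, which is excluded since the formula divides by $\epsilon_0$.
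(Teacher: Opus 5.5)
Your proposal is correct and follows the paper's proof: you lower-bound the error count by a $\mathrm{Binomial}(t,\epsilon_0)$ variable, apply the one-sided Hoeffding bound with $\lambda=\sqrt{tL/2}$, and solve $\epsilon_0 t-\sqrt{tL/2}=S^\text{deg}$ as a quadratic in $\sqrt{t}$, which gives exactly the stated $t^\text{deg}$. Your uniform-variable coupling justifies the step the paper only asserts (``we model it as $\mathrm{Bernoulli}(\epsilon_0)$''), which needs an argument because the actual error indicators are history-dependent rather than independent.
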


\begin{proof}
The error rate $\epsilon_t$ follows a Bernoulli distribution, starting at $\epsilon_0$ and degrading if the robot results in an error. The rate of error is therefore at least $\epsilon_0$, and we model it as $Bernoulli (\epsilon_0)$. 

Let $X_1, \dots, X_t$ be independent random variables with $X_i \in [0,1]$, and let $S_t = \sum_{i=1}^{n} X_i$, with mean $\mathbb{E}[S_t]$.
In particular, if each $X_i \sim \mathrm{Bernoulli}(\epsilon_0)$ independently, then $\mathbb{E}[S_t] = \epsilon_0 t$. Based on the one-sided Hoeffding bound, for any $\lambda > 0$,
\begin{equation}
\Pr\!\big(S_t \le \mathbb{E}[S_t] - \lambda\big) \;\le\; \exp\!\left(-\frac{2\lambda^2}{t}\right).
\end{equation}

The robot is fully degraded once $S_t \geq S^\text{deg} = (1-\epsilon_0)/k$ errors have accumulated.

Placing $\mathbb{E}[S_t] = \epsilon_0 t$ in the one-sided Hoeffding bound,
\begin{equation}
\Pr\!\big(S_t \le \epsilon_0 t - \lambda\big) \;\le\; \exp\!\left(-\tfrac{2\lambda^2}{t}\right), \forall \lambda > 0
\end{equation}

At $t=\frac{S^{deg}+\lambda}{\epsilon_0}$, and with $\lambda = \sqrt{\tfrac{t}{2}ln(1/\delta)}$

\begin{equation}
\Pr\!\big(S_t \le S^{deg}) \;\le\; \delta
\end{equation}

Thus the probability of not degrading is at most $\delta$, or equivalently, the probability of degradation is at least $1-\delta$, under

\begin{equation}
t=\frac{S^{deg}+\lambda}{\epsilon_0},~ \lambda = \sqrt{\tfrac{t}{2}ln(1/\delta)}
\end{equation}

Substituting and solving for $t$ yields, 

\begin{equation}
t^\text{deg} = \left(\frac{\sqrt{L/2} + \sqrt{L/2 + 4 \epsilon_0 S^\text{deg}}}{2 \epsilon_0}\right)^2
\end{equation}
where $L = \ln\tfrac{1}{\delta}$, and $S^\text{deg} = (1-\epsilon_0)/k$

\end{proof}


\subsection{Relaxation 2: Goal Adaptation Shifts the Optimal Policy}
\label{sec:relaxation_goals}
The second relaxation concerns the human's goals. Offline evaluation optimizes the robot against the goals recorded in the data, but a human freed from one task by the robot could reallocate to a new goal if it produces a higher net value. For example, a really busy user might always make a quick breakfast of scrambled eggs, toast and hash browns. But if the robot takes over making toast and hash browns, then the user might find time to make an omelet with vegetables instead. A robot optimized against the recorded goals cannot serve these reallocated goals, so it captures less value than an optimal robot that anticipates the adaptation.

Let $V(\pi^r, g)$ be the joint value of robot policy $\pi^r$ under human goal distribution $g$. Let $g_0$ be the goal distribution recorded in the data, and let $H(\pi^r)$ be the goal distribution the human forms in response to robot policy $\pi^r$. We assume $c_r = 0$, hence the joint objective $V - C_h$ equals the human's net value, which the human maximizes.

Offline evaluation selects the policy $\pi^\star_{\text{off}} = \arg\max_{\pi^r} V(\pi^r, g_0)$, the best response to the fixed recorded goals. The closed-loop optimum is $\pi^\star = \arg\max_{\pi^r} V\big(\pi^r, H(\pi^r)\big)$, the best response accounting for the goals the robot induces.

\begin{theorem}
\label{thm:goals}
The net value achieved by the optimal adaptive policy, i.e. one that accounts for the shifted goal distribution is at least as high as the net value achieved by the optimal non-adaptive policy
\begin{equation}
\max_{\pi^r} V\big(\pi^r, H(\pi^r)\big) \;\ge\; V\big(\pi^\star_{\text{off}}, H(\pi^\star_{\text{off}})\big),
\end{equation}
with strict inequality whenever the human's value-maximizing adaptation reaches goals that $\pi^\star_{\text{off}}$ does not serve optimally.
\end{theorem}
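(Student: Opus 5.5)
The weak inequality follows directly from the definition of a maximum; the strict part needs a precise reading of the hypothesis. Define $F(\pi^r) = V\big(\pi^r, H(\pi^r)\big)$, the closed-loop objective that evaluates each robot policy against the goal distribution it induces. The left-hand side is $\max_{\pi^r} F(\pi^r) = F(\pi^\star)$. The right-hand side is $F(\pi^\star_{\text{off}})$. Since $\pi^\star_{\text{off}}$ belongs to the same policy class over which the maximum is taken, $F(\pi^\star) \ge F(\pi^\star_{\text{off}})$ immediately. Existence of the maximum is assumed, as in the definition of $\pi^\star$. If it is not attained, I would replace $\max$ by $\sup$ and the argument is unchanged.

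For strictness, I will formalize the hypothesis that the human's value-maximizing adaptation reaches goals that $\pi^\star_{\text{off}}$ does not serve optimally. Let $g_1 = H(\pi^\star_{\text{off}})$ be the adapted goal distribution. The hypothesis says there is a policy $\tilde\pi$ with
\begin{equation}
V(\tilde\pi, g_1) > V(\pi^\star_{\text{off}}, g_1).
\end{equation}
This alone does not give strictness, because $\tilde\pi$ would induce its own goal distribution $H(\tilde\pi)$, which may differ from $g_1$. The step I expect to be the main obstacle is closing the gap between $V(\tilde\pi, g_1)$ and $F(\tilde\pi) = V(\tilde\pi, H(\tilde\pi))$.

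I would close it using the human's value maximization together with $c_r = 0$. Since the joint objective equals the human's net value, and the human picks $H(\pi)$ to maximize net value given the robot policy $\pi$, we have
\begin{equation}
V(\pi, H(\pi)) \ge V(\pi, g) \quad \text{for every feasible goal distribution } g.
\end{equation}
This requires $H(\pi) = \arg\max_g V(\pi, g)$ over the human's feasible goal distributions, and that $g_1$ remains feasible under $\tilde\pi$. I would state both as standing assumptions implicit in the setup. Then
\begin{equation}
F(\tilde\pi) \ge V(\tilde\pi, g_1) > V(\pi^\star_{\text{off}}, g_1) = F(\pi^\star_{\text{off}}),
\end{equation}
and hence $\max_{\pi^r} F(\pi^r) \ge F(\tilde\pi) > F(\pi^\star_{\text{off}})$, which is the strict inequality.

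If adopting this best-response reading of $H$ as an assumption is unacceptable, I would instead take the strict clause as a definitional condition: \emph{not served optimally} means some policy achieves strictly higher closed-loop value. The strict inequality then follows directly from the same maximum argument.
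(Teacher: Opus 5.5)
Your proof is correct and follows the paper's route: the weak inequality comes from $\pi^\star_{\text{off}}$ being a feasible policy for the coupled objective, and strictness comes from exhibiting a policy that serves the adapted goals $g_1 = H(\pi^\star_{\text{off}})$ better. Your extra step is to use the paper's own assumption that, with $c_r = 0$, the human best-responds, which gives $V(\tilde\pi, H(\tilde\pi)) \ge V(\tilde\pi, g_1)$; this makes rigorous what the paper's proof glosses when it asserts strictly higher value ``under $H$'' without addressing that $\tilde\pi$ induces its own goal distribution.
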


\begin{proof}
The policy $\pi^\star_{\text{off}}$ is one feasible policy for the coupled objective $V\big(\pi^r, H(\pi^r)\big)$, so the maximum over all policies is at least its value, giving the inequality. For strictness, consider a rollout in which the robot's assistance frees the human from a recorded goal in $g_0$, and the human reallocates to a goal of higher net value, so $H(\pi^\star_{\text{off}}) \neq g_0$. Since $\pi^\star_{\text{off}}$ was optimized against $g_0$ rather than $H(\pi^\star_{\text{off}})$, there exists a policy that serves the reallocated goals and attains strictly higher value under $H$, so the inequality is strict.
\end{proof}

\begin{corollary}
\label{cor:goals}
If the human's goals do not adapt to the robot, so that $H(\pi^r) \equiv g_0$, then $\pi^\star_{\text{off}}$ is closed-loop optimal and offline evaluation is exact. The value gap is nonzero precisely when the human adapts, that is when $H(\pi^\star_{\text{off}}) \neq g_0$.
\end{corollary}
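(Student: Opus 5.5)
The corollary follows by specializing Theorem~\ref{thm:goals} and its proof, and I will handle the two sentences separately.

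\textbf{First sentence.} Assume $H(\pi^r) \equiv g_0$ for every robot policy. Then the coupled closed-loop objective collapses to the offline one:
\begin{equation}
V\big(\pi^r, H(\pi^r)\big) = V(\pi^r, g_0) \quad \forall \pi^r .
\end{equation}
The two objectives are identical as functions of $\pi^r$, so they share the same argmax. Hence $\pi^\star_{\text{off}}$ attains $\max_{\pi^r} V(\pi^r, H(\pi^r))$ and is closed-loop optimal. Moreover, the score offline evaluation assigns to any policy, $V(\pi^r, g_0)$, equals its closed-loop value. This is the precise sense in which offline evaluation is ``exact.''

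\textbf{Second sentence, one direction.} The value gap is
\begin{equation}
\Delta^\star = \max_{\pi^r} V\big(\pi^r, H(\pi^r)\big) - V\big(\pi^\star_{\text{off}}, H(\pi^\star_{\text{off}})\big) \ge 0 .
\end{equation}
Suppose $H(\pi^\star_{\text{off}}) = g_0$. I will show $\Delta^\star = 0$. For any $\pi^r$, I want
\begin{equation}
V\big(\pi^r, H(\pi^r)\big) \le V(\pi^\star_{\text{off}}, g_0).
\end{equation}
This is immediate only if $H$ is constant, as in the first part. In general it needs an extra structural assumption. The natural candidate is that the human's best response $H(\pi^r)$ is value-maximizing over goal distributions, so $V(\pi, H(\pi)) = \max_g V(\pi, g)$. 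Even with this, I would also need that $g_0$ is already the human's best response under every policy, which reduces to the constant case.

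\textbf{Second sentence, other direction.} Suppose $H(\pi^\star_{\text{off}}) \neq g_0$. Then the strictness clause of Theorem~\ref{thm:goals} directly gives $\Delta^\star > 0$, under the theorem's hypothesis that the reallocated goals are ones $\pi^\star_{\text{off}}$ does not serve optimally.

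\textbf{Main obstacle.} The hard step is the ``precisely when'' characterization, specifically the direction $H(\pi^\star_{\text{off}}) = g_0 \Rightarrow \Delta^\star = 0$. Some other policy might induce an adaptation that yields higher value even though $\pi^\star_{\text{off}}$ induces none. My plan is to read the corollary's ``adapts'' as the blanket condition $H \equiv g_0$ versus $H \not\equiv g_0$, where the first part supplies the zero gap. For the converse I would rely on the strictness clause of Theorem~\ref{thm:goals}, noting explicitly that this is where its non-degeneracy hypothesis is used.
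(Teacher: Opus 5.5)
Your first-sentence argument is correct and is the intended one. The paper gives no separate proof of this corollary: it is meant to be read off from the coupled objective $V(\pi^r,H(\pi^r))$ collapsing to $V(\pi^r,g_0)$ when $H\equiv g_0$, together with the strictness clause of Theorem~\ref{thm:goals}.

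Your suspicion about ``precisely when'' is also justified, and in fact understated. With $\Delta^\star$ as you define it, the direction $H(\pi^\star_{\text{off}})=g_0\Rightarrow\Delta^\star=0$ is false, so no argument (yours or the paper's implicit one) can establish it. Take two policies and two goal distributions with $V(\pi_1,g_0)=2$, $V(\pi_1,g_1)=1$, $V(\pi_2,g_0)=1$, $V(\pi_2,g_1)=3$, and a value-maximizing human, so $H(\pi_1)=g_0$ and $H(\pi_2)=g_1$. Then $\pi^\star_{\text{off}}=\pi_1$ and $H(\pi^\star_{\text{off}})=g_0$, yet $\Delta^\star=3-2=1$. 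This is exactly the paper's omelet scenario: taking over toast and hash browns scores worse against the recorded routine but unlocks a better goal.

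The genuine gap in your proposal is the repair. Under the blanket reading, your first part only yields $\Delta^\star>0\Rightarrow H\not\equiv g_0$. The converse you then need is $H\not\equiv g_0\Rightarrow\Delta^\star>0$, and the strictness clause does not provide it. That clause is keyed to the adaptation at $\pi^\star_{\text{off}}$: read it as saying some $\pi'$ has $V(\pi',H(\pi^\star_{\text{off}}))>V(\pi^\star_{\text{off}},H(\pi^\star_{\text{off}}))$. This by itself forces $H(\pi^\star_{\text{off}})\neq g_0$, a strictly stronger condition than $H\not\equiv g_0$.

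The converse is in fact false. Lowering $V(\pi_2,g_1)$ to $3/2$ in the example keeps $H(\pi_2)=g_1$, so $H\not\equiv g_0$, but gives $\Delta^\star=0$. So your plan proves a sandwich, not an iff:
\begin{itemize}
\item $H\not\equiv g_0$ is necessary for $\Delta^\star>0$;
\item the non-degeneracy hypothesis is sufficient, but only once you make explicit the assumption $V(\pi,H(\pi))=\max_g V(\pi,g)$ that you mention.
\end{itemize}
That assumption is what turns ``$\pi'$ beats $\pi^\star_{\text{off}}$ against $H(\pi^\star_{\text{off}})$'' into ``$\pi'$ beats it in closed loop,'' a step the proof of Theorem~\ref{thm:goals} leaves implicit.

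If you want an iff with the literal condition $H(\pi^\star_{\text{off}})\neq g_0$, reinterpret ``value gap'' rather than ``adapts.'' Take it to mean the evaluation error $V(\pi^\star_{\text{off}},H(\pi^\star_{\text{off}}))-V(\pi^\star_{\text{off}},g_0)$ of the offline policy. For a human who leaves $g_0$ only for strictly higher value, this error is nonzero exactly when $H(\pi^\star_{\text{off}})\neq g_0$.
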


The offline policy is suboptimal not because adaptation harms the joint objective, but because it creates value that only a robot anticipating the adaptation can capture. This is the same decoupling suboptimality established for cooperative inverse reinforcement learning, where best-responding to a fixed human demonstration is provably not part of an optimal joint policy~\cite{hadfield2016cooperative}.

\subsection{Conclusion}

The two relaxations isolate two ways, which can be stacked on top of each other, in which offline evaluation misrepresents closed-loop performance. Relaxation 1 shows that offline evaluation understates cost, because it never evaluates over the degraded regime. Relaxation 2 shows that offline evaluation wrongly evaluates value, whenever the human adapts toward higher-value goals that a fixed-goal policy cannot serve. A faithful evaluation must therefore close the loop, allowing the human both to leave robot-induced changes in place and to adapt their goals in response to the robot. Our evaluation framework includes a human model that adapts to robot actions, fully addressing the first assumption, and partially addressing the second assumption, since the LLM human can work towards future goals early but is not allowed to introduce new goals. We leave a more involved consideration of the effect of robot actions on human goals to future work, as it requires studying human behavior adaptation in more detail.
\section{Formalism: Extended Discussion}

We introduced a formalism for proactive robot assistance to tie together prior work in the area. This appendix expands on it in three ways. First, where the main text outlines the sub-problems, here we survey the existing work relevant to each in detail. Second, we draw parallels between our formulation of proactivity and value alignment in AI. Finally, we lay out extensions to the formalism and the new problems they raise.

\subsection{Prior Work Across the Sub-Problems}
\label{app:lit}

The main paper decomposes unprompted proactive assistance into three groups of sub-problems, grounding, anticipation, and planning (Section~\ref{sec:sub_problems_framework}, Table~\ref{tab:open-problems}). Here we survey prior work relevant to each. No single method addresses proactivity as a whole, so many of these references come from adjacent applications, and some are the closest available starting points rather than direct solutions. Grounding builds a goal vocabulary and labels observed behavior against it, anticipation learns the user's policy and predicts current and future goals, and planning turns those predictions into cost-minimizing robot actions. 

\subsubsection{Grounding}
Grounding involves understanding the intentional structure in user behavior from action observations.

\noindent\textbf{G1: Goal-space representation.}
Grounding a proactive robot begins with the space of goals it reasons over, which the robot must curate itself rather than receive as input. To our knowledge, no prior work autonomously curates such a space, but related efforts study the kinds of representations it could take. Hierarchical representations capture goals at multiple granularities~\citeapp{Levy2017LearningMH,Zhu2021BottomUpSD}, which matters because goals are compositional and range from a single pick-and-place to an entire chore. Open-set representations grounded in natural language~\cite{brohan2023can}, \citeapp{Zhou2023BridgingLA} accommodate an open, growing vocabulary of goals and the ambiguous line between a task variant and a distinct task. Neither line addresses the curation problem directly, and both leave open how to keep a space coarse enough to model intent yet fine enough to separate variations, or how to represent a multi-modal belief over concurrent goals when users multitask.

\noindent\textbf{G2: Action-goal association.}
Given a goal space, the robot must match observed actions to goals from a continuous stream of passive observations, the core difficulty being that users interleave actions across concurrent goals while performing extraneous ones. Prior work infers this intentional structure as a prediction of goals given actions, through symbolic models~\cite{puig2020watch,Baker2009ActionUAA} and learned models~\cite{Nakahashi2015ModelingHU,ZhiXuan2020OnlineBG}. Complementary techniques can be borrowed from hindsight experience replay~\citeapp{Andrychowicz2017HindsightER}, originally used in reinforcement learning to relabel past trajectories with sensible goals and since applied to robotics~\citeapp{Chebotar2021ActionableMU}. These methods recover associations for observed trajectories but are not designed for the open, unlabeled goal spaces a proactive robot curates.

\subsubsection{Anticipation}

The grounding sub-problems can be solved in hindsight from historical observations; once learned, the resulting models support online inference and anticipation of the goals to assist with.

\noindent\textbf{A1: Human policy learning.}
Recovering the user's goal-oriented policy $\pi^h$ from goal-conditioned actions is the province of learning from demonstration~\citeapp{ravichandar2020recent}, and specifically the subset that learns from observation alone~\citeapp{Xiong2021LearningBWA,Torabi2018BehavioralCFA}, since a proactive robot sees only passive behavior. Beyond the observation-only constraint, learning from a single user's sparse data is a challenge. Some methods reduce this by learning a discrete set of policies through user clustering or mixture models~\cite{Nikolaidis2014EfficientML,Jayanthi_strategy_2022}, or by guiding users toward more effective demonstrations~\citeapp{Sakr2025HowCE,schrum2023reciprocal}, though these rely on explicit demonstrations in the robot's actuation space rather than passive observation.

\noindent\textbf{A2: Goal inference.}
With models of the goal space and the human policy, the robot can infer the user's current goal from recent observations. Prior work casts this as maintaining a belief over goals with actions as likelihoods, addressed through symbolic methods~\cite{Ramrez2010ProbabilisticPR, puig_nopa_2023}, \citeapp{Pereira2019LandmarkBasedAF}, learned methods~\citeapp{Amado2018LSTMBasedGRA}, \cite{zhu2026proact}, and hybrids of the two~\cite{Lynch2019LearningLPA}, \citeapp{Amado2018GoalRIA}. These methods infer goals only for tasks the user has already begun.

\noindent\textbf{A3: Goal anticipation.}
Anticipating goals the user has not yet started requires modeling the temporal goal generation process. Some works use language models to predict future subgoals given a higher-level task inferred from observations~\citeapp{arora_anticipate_2024}, but are limited to subgoals of already-initiated tasks. Others treat anticipation as temporal sequence prediction, extracting habitual patterns from a history of observed actions~\cite{patel_stot_2023,patel_slatepro_2023,bartoli_streak_2025}. A complementary set of work models object movement with probabilistic filtering over how long items persist in and return to locations~\citeapp{zeng2020semantic},  \cite{saavedra2026perceptua}, which learns efficiently but cannot capture the complex patterns that learned temporal models can. A limitation cutting across this group is that the robot's own actions influence which goals the user later forms, an effect the user's goal generation process absorbs through the environment and hidden state. Because the robot cannot observe this until it starts acting, these methods face a goal distribution that shifts as the user adapts, a problem no prior work addresses.

\subsubsection{Planning}
Planning involves predicting a robot action to aid with the anticipated variable.

\noindent\textbf{P1: Cost modeling.}
To minimize the user's cost, the robot must model it, and a human action's cost is a complex function of what both agents do and of the user's preferences. The robot can reduce physical cost by taking over or supporting an action, but cost functions vary across actions and users with factors such as the enjoyment a user draws from an action and their physical capability. Interaction carries its own costs: the communication cost of answering questions and giving instructions, the teaching cost of demonstrating a task, and the monitoring cost of supervising an untrusted robot. Existing work models only a subset of these. Some approaches model action cost for optimal allocation via task complexity~\cite{malik2019complexity} and user fatigue~\cite{smith2020assessing}, or learn personalized assistive preferences from verbal feedback~\cite{patel2025taaco,dogan_grace_2025}. Robot actions also affect the user beyond the task it takes over, since sharing a space lets an action interfere with what the user does independently, for instance getting in the way of a hurried user or interrupting a call, which requires modeling social norms to avoid. Attempts at such contextual reasoning rely on simplistic state representations, hand-designed features~\cite{patel2025taaco} or scene captions~\cite{dogan_grace_2025}, that cannot capture complex social effects, while work on interruptibility~\cite{banerjee2018robot}, \citeapp{fogarty2005predicting} can inform the cost of interactive actions.

\noindent\textbf{P2: Coordinated action selection.}
Given the cost model and the anticipated goals and human policy, the robot optimizes its own policy to minimize cost. Cooperative inverse reinforcement learning~\cite{hadfield2016cooperative}, \citeapp{assistanceZero,zhao_learning_nodate} offers one view of optimizing a collaborative policy against a learned human policy, but is confined to simulated domains where rewards are available for training. Proactivity requires extending these methods to learned cost models, which may be less robust to unseen actions, and to richer behaviors: proactive communication~\cite{patel2025adapt}, hedging under uncertainty~\citeapp{ju2008design}, and distinguishing reversible from irreversible or wasteful actions. The required temporal accuracy of $\pi^h$ also varies with the scenario, since assisting a current goal demands fine-grained coordination while preparing for a not-yet-active future goal tolerates a coarser model, making the degree of collaboration a fluid variable over time rather than the binary constant robotics typically assumes.

\subsection{Parallels between Value Alignment and Proactivity}

Proactive assistance is fundamentally a problem of value alignment. By removing the task as an intermediate measure of an action's usefulness, proactivity makes alignment with the user's overall value the central objective rather than a secondary constraint. In contrast with standard alignment methods, such as RLHF, which compare alternative ways of carrying out an already-assigned task, a proactive robot must predict the task itself, reasoning from the user's values about what should be done, how, when, and by whom. This is harder, since the robot anticipates a task from scratch rather than choosing among options. However, embodiment also helps, since the robot acts within a far narrower sphere than a general-purpose chatbot. A robot situated in a specific home can model its users' values with high fidelity, spanning the universal (e.g., morality), the cultural (e.g., vegetarianism), the household (e.g., shoes on or off), and individual preference.

\subsection{Extensions of the formulation}

The formalism discussed in Sec.~\ref{sec:problem_formulation} assumes a fully observable setting with a single user. In this section, we relax these two assumptions and outline the additional challenges that arise as a result.

First, the state and the other agent's actions may be \textbf{partially observable}, so the two agents' beliefs can diverge. Each forms goals from its own belief over the state-action sequence and its own model of values and costs. For a single user we treat their value and cost model as ground truth, so even when beliefs and goals differ, the robot's objective remains to maximize the user's hidden net value. Either agent's belief over the state $s_t$ can depart from the truth under observability limits, shifting the goals they form. Such divergence is sometimes desirable: if the user steps to cross the road without seeing an oncoming car, the robot's goals should diverge from theirs rather than follow them. This divergence can be modeled through theory of mind and reduced through communication, with model reconciliation methods~\citeapp{chakraborti2019plan} generating explanations and active clarification methods~\citeapp{ren2023robots}, \cite{patel2025adapt} deciding what to ask. The robot's action space must therefore include communicative actions, verbal or non-verbal, such as opening a cabinet to reveal its contents or acting legibly so its intent is easy to read. Clarifications and explanations serve different intermediate objectives, correcting the robot's belief over the user's cost and value functions in one case and the user's belief over the state and the robot's actions in the other, but both emerge from the same underlying objective of value maximization.

Second, the robot may assist \textbf{multiple users} at once, and must align its model of values and costs across the whole set. This makes alignment substantially harder. Rather than learning one value function, the robot must learn each user's and resolve inconsistencies between them, as when members of a household hold conflicting preferences and the robot must maintain some hierarchy of control or ownership. The people who shape its value functions are moreover not limited to those it directly assists. Other stakeholders, including family, caregivers, medical practitioners, and the law, may impose their own constraints. A caregiver or doctor might specify values meant to protect the user's well-being, such as preserving their autonomy, and arbitrating between these and the user's own preferences is a genuine ethical problem. Fixed legal or ethical constraints, such as Asimov's laws, add further requirements, though these sit unambiguously at the top of the hierarchy.

\begin{table*}[t]
  \centering
  \caption{\small{Closed-loop performance under the scripted human, with a shifted evaluation distribution but not training distribution, compared to
  iteratively trained versions where distribution shift affects both phases}}
  \label{tab:results_dist}
  \setlength{\aboverulesep}{0pt}
  \setlength{\belowrulesep}{0pt}
  \begin{tabular}{l|rrr|rrr}
  \toprule
   & \multicolumn{3}{c|}{Distribution Shift at Test time only} & \multicolumn{3}{c}{Distribution Shift during Training (main results)} \\
  \cmidrule(lr){2-4} \cmidrule(lr){5-7}
  Method & Net Saved (\%) $\uparrow$ & Perturbed (\%) $\downarrow$ & F1 $\uparrow$ & Net Saved (\%) $\uparrow$ & Perturbed (\%) $\downarrow$ & F1
  $\uparrow$ \\
  \midrule
  SLaTe-PRO          & 24.2 $\pm$ 5.4 & 10.8 $\pm$ 3.0  & 0.345 $\pm$ 0.071 & 16.6 $\pm$ 3.0 & 11.8 $\pm$ 1.4 & 0.264 $\pm$ 0.043 \\
  STREAK             & 0.4 $\pm$ 0.6  & 66.3 $\pm$ 57.4 & 0.009 $\pm$ 0.015 & 20.3 $\pm$ 7.3 & 48.3 $\pm$ 19.0 & 0.296 $\pm$ 0.105 \\
  \ourMethod\ (ours) & 36.1 $\pm$ 1.8 & 16.0 $\pm$ 0.7  & 0.496 $\pm$ 0.022 & 30.8 $\pm$ 1.8 & 10.1 $\pm$ 0.8 & 0.452 $\pm$ 0.024 \\
  \bottomrule
  \end{tabular}
\end{table*}

\section{Supplementary Results}
\label{add:results}

While we report our headline results in the main paper, this section includes supplementary experiments excluded from the main paper due to space constraints.

\subsection{Effect of Confidence Threshold}
\label{add:results_confidence_thresh}

The probability threshold $c_{thresh}$ is a tunable parameter, which can be used to modulate the robot's conservativeness. Figure~\ref{fig:prob_thresh} shows the precision-recall curve and how the percentage of net actions saved and of perturbed actions changes as we vary $c_{thresh}$. We use $c_{thresh}=0.1$, which saves the most user actions with relatively few perturbed actions, and maximizes overall F1-score.

\begin{figure}[h]
    \centering
    \begin{subfigure}[]{0.49\linewidth}
    \includegraphics[trim=2mm 8mm 2mm 2mm, clip, width=\linewidth]{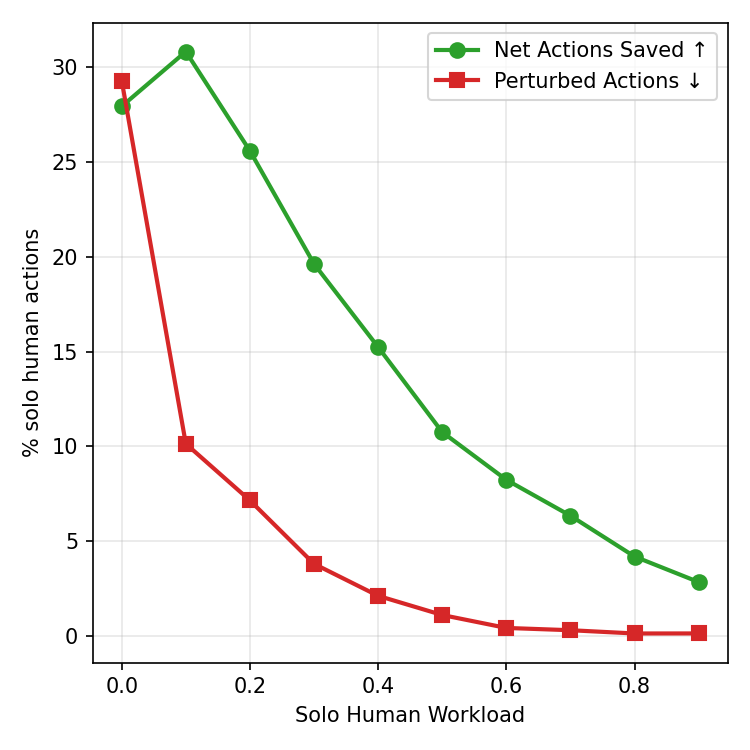}
    \caption{Net actions and perturbed actions vs. $c_{thresh}$}
    \end{subfigure}
    \begin{subfigure}[]{0.49\linewidth}
    \includegraphics[trim=2mm 2mm 2mm 2mm, clip, width=\linewidth]{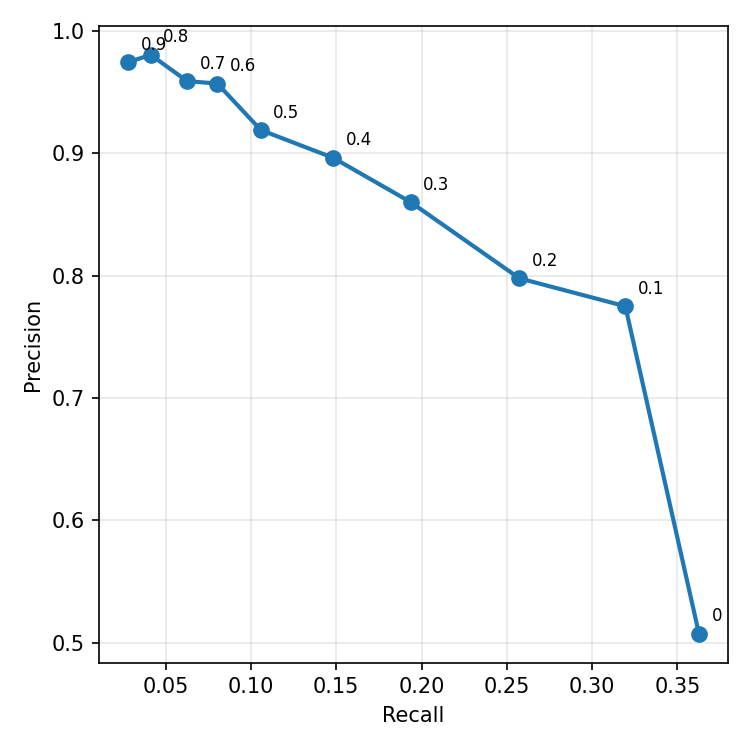}
    \caption{Precision vs. Recall}
    \end{subfigure}
    \caption{Variation in metrics as we vary $c_{thresh}$.}
    \label{fig:prob_thresh}
\end{figure}



\subsection{Effect of distribution drift during evaluation and training}

Moving from offline to closed-loop evaluation introduces state-distribution drift not only in evaluation but also in training. Here we separate the two, comparing drift that occurs during both training and testing (our main results) against drift at test time alone. To isolate the latter, the robot first observes the user without acting for all but the final five days. It therefore trains on data free of robot actions and faces distribution shift only at evaluation. Table~\ref{tab:results_dist} reports the result, isolating the performance lost to training-time drift beyond test-time drift alone. SLaTe-PRO and GAP lose 7.6\% and 5.3\% net actions saved, respectively, from the shifting training distribution. STREAK shows the opposite effect, improving when the distribution shifts during training, which we attribute to its continual-learning mechanism adapting well to a drifting training distribution. We run these experiments on the scripted user only, for cost reasons.

\section{Implementation details}

\subsection{Hyperparameters for Reproducibility}
\label{app:implementation}
This section lists the hyperparameters and fixed constants used to produce our results. The code for GAP and the evaluation setup is openly available at \url{https://github.com/Maithili/GAP}. All values are held fixed across conditions unless noted. 

\noindent\textbf{Grounding and task graphs.}
Task-graph transition probabilities use Laplace smoothing with $\alpha = 0.5$. Activity-name embeddings come from OpenAI \texttt{text-embedding-3-small} and are projected by a learned layer.

\noindent\textbf{Particle filter.}
The filter uses $N = 1000$ particles (seed $0$), at most $\mathrm{MAX\_OPEN} = 3$ concurrent open activities, and a continuation look-ahead of $\mathrm{CONT\_HOPS} = 4$ hops, resampling whenever the effective sample size falls below $N/2$. A robot action advances a matching particle marker with probability $0.5$. Open activities decay by an idle hazard $\exp(-dt\,|\mathrm{open}|/\tau)$ and are closed as stale with probability $1 - \exp(-dt/\mathrm{stale})$. The opening time-prior is Gaussian with a standard-deviation floor of $20$ min. The unclear-action probability is fixed to $\epsilon = 0.001$.

\noindent\textbf{Anticipation transformer.}
The transformer uses $d_{\mathrm{model}} = 96$, $2$ layers, $4$ heads, a feed-forward dimension of $192$, a name-projection dimension of $64$, and dropout $0.2$. We train with learning rate $10^{-3}$, a $7$-day validation split, and early stopping with patience $50$, weighting the clock-position loss by $0.5$. Times are encoded by sinusoids over periods $\{24, 18, 12, 8, 4, 2, 1\}$ h, giving $14$ dimensions, over a daily range of $6{:}00$ to $24{:}00$ ($360$ to $1440$ min) at $10$-min ticks.

\noindent\textbf{Planner and policy.}
The planner proposes candidates above a confidence threshold $c_{\mathrm{thresh}} = 0.1$, with a task-graph beam look-ahead of $3$ actions and at most $8$ actions per tick. Preparation for upcoming activities uses a $60$-min horizon, a next-activity confidence of $0.5$, and a discount of $1.0$. 
The precedence check is a soft multiplicative factor, with each candidate attenuated by every unsatisfied same-object prerequisite, driving still-pending candidates toward zero and leaving satisfied ones near one.
The per-action lead-time threshold is initialized to $\mathrm{act\_window\_min} = 30$ min and updated daily by a pinball rule toward a target revert rate of $\text{lead}_\alpha = 1/3$: it grows by $\text{lead}_s \cdot \text{lead}_\alpha$ min on a success and shrinks by $\text{lead}_s \cdot(1 - \text{lead}_\alpha)$ min on a revert or supersede, with $\text{lead}_s = 5$ min/day, clamped to $[0, 120]$ min. The acceptance estimate $P_{\mathrm{accept}}$ is Laplace-smoothed with an optimistic prior and decayed across days by a fixed factor $\mathrm{accept\_gamma} = 0.9$, with a within-day soft backoff that multiplies utility by a reject count for recently rejected placements.

\noindent\textbf{Continual learning.}
All models retrain every $n_{\mathrm{retrain}} = 5$ days on all past data, so the robot takes no actions during the first five days. Task-graph counts and the transformer's training data are recency-weighted by $\gamma$, selected at each retraining by a prequential (one-step-ahead) grid search over $\gamma \in \{0.85, 0.90, 0.95, 0.98, 1.0\}$.

\subsection{LLM Prompts}
\label{app:prompts}

Following are examples of all LLM prompts we use, including for the LLM-based curator, the human adaptation model, and the plain LLM baseline.

\subsubsection{LLM Curator Prompt}

\lstnewenvironment{prompt_curator}[1][]
{\lstset{
    basicstyle=\scriptsize\ttfamily, 
    breaklines=true,
    breakatwhitespace=true,
    breakindent=0em,
    breakautoindent=true,
    #1}},
{}
    
\begin{mdframed}[
    linewidth=1pt, 
    linecolor=black, 
    backgroundcolor=gray!10, 
    roundcorner=5pt, 
    frametitle={Example LLM Curator Prompt},
    frametitlealignment=\center, 
    everyline=true,
    nobreak=false,
]
\begin{prompt_curator}

You annotate a single person's household routine from object-relocation logs (the only sensor available - no vision, no audio). Each line is one observed move: time, object, from-location -> to-location. The day runs 06:00-24:00. Your job: partition the day's numbered events into contiguous segments, each corresponding to one activity episode (e.g. a meal, dishwashing, getting ready), and name each segment with a short snake_case activity name. REUSE an existing vocabulary name whenever the episode is the same kind of activity; create a new name only when nothing in the vocabulary fits; propose a merge when you realize two vocabulary entries denote the same underlying activity; propose a rename when an existing name no longer describes what you have observed (e.g. cereal_for_breakfast -> cereal_or_oatmeal_breakfast after seeing oatmeal days). Names are encoded by a language model downstream, so make them semantically meaningful and self-describing - say what the person is doing and with what (prefer 'wash_dinner_dishes' over 'cleanup_2').

## Current activity vocabulary (reuse these names whenever possible)
- breakfast: Morning meal: cereal or oatmeal prepared and eaten at the table.
- wash_dishes: Dishes and utensils returned from the table to the sink or dishwasher.
- computer_work: Working at the home-office desk with laptop and notebook.

## Today's observed object relocations (day 3)
0. 06:20 tooth_paste: toothbrush_holder -> bathroom_cabinet
1. 07:20 remote_control: tvstand -> sofa
2. 08:10 bowl: cupboard -> table
3. 08:10 remote_control: sofa -> tvstand
4. 08:20 food_oatmeal: cupboard -> table
5. 08:20 milk: fridge -> table
6. 08:20 spoon: cupboard -> bowl
7. 08:30 food_oatmeal: table -> cupboard
8. 08:30 milk: table -> fridge
9. 08:40 bowl: table -> sink
10. 08:40 spoon: bowl -> sink
11. 08:50 bowl: sink -> cupboard
12. 08:50 spoon: sink -> cupboard
13. 09:50 headset: filing_cabinet -> desk
14. 10:00 headset: desk -> filing_cabinet
15. 11:10 food_jam: fridge -> kitchen_counter
16. 11:10 plate: cupboard -> kitchen_counter
17. 11:20 food_bread: kitchen_counter -> kitchen_counter
18. 11:20 food_peanut_butter: cupboard -> kitchen_counter
19. 11:20 knife: knifeblock -> kitchen_counter
20. 11:20 plate: kitchen_counter -> table
21. 11:30 food_jam: kitchen_counter -> fridge
22. 11:30 food_peanut_butter: kitchen_counter -> cupboard
23. 11:40 knife: kitchen_counter -> cupboard
24. 11:40 plate: table -> cupboard
25. 13:10 remote_control: tvstand -> sofa
26. 14:00 remote_control: sofa -> tvstand
27. 14:20 cd: tvstand -> cd_player
28. 14:30 cd: cd_player -> table
29. 15:20 book: bookshelf -> sofa
30. 15:40 book: sofa -> bookshelf
31. 16:30 cookingpot: cupboard -> stove
32. 16:30 fryingpan: cupboard -> stove
33. 16:30 oil: cupboard -> kitchen_counter
34. 16:30 plate: cupboard -> kitchen_counter
35. 16:40 dry_pasta: cupboard -> kitchen_counter
36. 16:40 food_chicken: fridge -> kitchen_counter
37. 17:00 cookingpot: stove -> table
38. 17:00 dry_pasta: kitchen_counter -> cupboard
39. 17:00 food_chicken: kitchen_counter -> fridge
40. 17:00 fork: cupboard -> table
41. 17:00 fryingpan: stove -> table
42. 17:00 plate: kitchen_counter -> table
43. 17:00 spoon: cupboard -> table
44. 17:10 cookingpot: table -> sink
45. 17:10 fork: table -> sink
46. 17:10 fryingpan: table -> sink
47. 17:10 plate: table -> sink
48. 17:10 spoon: table -> sink
49. 18:00 plate: sink -> cupboard
50. 18:00 spoon: sink -> cupboard
51. 18:10 cookingpot: sink -> cupboard
52. 18:10 fryingpan: sink -> cupboard
53. 18:20 remote_control: tvstand -> sofa
54. 19:10 remote_control: sofa -> tvstand
55. 19:20 cutting_board: kitchen_counter -> table
56. 19:20 food_cheese: fridge -> cutting_board
57. 19:30 chessboard: bookshelf -> table
58. 19:30 deck_of_cards: bookshelf -> table
59. 19:30 wine_glass: cupboard -> table
60. 20:10 wine: table -> kitchen_counter
61. 20:10 wine_glass: table -> sink
62. 20:20 chessboard: table -> bookshelf
63. 20:20 deck_of_cards: table -> bookshelf
64. 20:20 food_cheese: cutting_board -> fridge
65. 20:20 wine_glass: sink -> cupboard
66. 20:40 instrument_guitar: home_office -> chair
67. 20:50 instrument_guitar: chair -> home_office

## Instructions
- Group events 0..67 into contiguous, non-overlapping segments (use first_event_idx/last_event_idx). Label every event that belongs to an activity; an isolated event that clearly interrupts an activity it does not belong to (e.g. a stray put-away in the middle of TV time) may be left out of all segments rather than absorbed into the surrounding one.
- Consecutive events that serve the same purpose belong to one segment, even with small time gaps; a long gap or a change of purpose starts a new segment.
- Name each segment with an existing vocabulary name when the episode is the same kind of activity; otherwise create a new snake_case name (declare it in new_activities with a one-line description, and set is_new_activity on those segments). Make names semantically meaningful and self-describing - they are encoded by a language model downstream.
- If two existing vocabulary entries are really the same activity, propose a merge (keep, remove, reason).
- If an existing name no longer fits what you have observed, propose a rename (old, new, new_description, reason) - e.g. cereal_for_breakfast -> cereal_or_oatmeal_breakfast. Tag today's segments with the NEW name.
\end{prompt_curator}
\end{mdframed}

\lstnewenvironment{prompt_curator_sch}[1][]
{\lstset{
    basicstyle=\scriptsize\ttfamily, 
    breaklines=true,
    breakatwhitespace=true,
    breakindent=0em,
    breakautoindent=true,
    #1}},
{}
    
\begin{mdframed}[
    linewidth=1pt, 
    linecolor=black, 
    backgroundcolor=gray!10, 
    roundcorner=5pt, 
    frametitle={Response Schema},
    frametitlealignment=\center, 
    everyline=true,
    nobreak=false,
]
\begin{prompt_curator_sch}

{
  "name": "curate_day",
  "strict": true,
  "schema": {
    "type": "object",
    "additionalProperties": false,
    "required": [
      "segments",
      "new_activities",
      "merges",
      "renames"
    ],
    "properties": {
      "segments": {
        "type": "array",
        "items": {
          "type": "object",
          "additionalProperties": false,
          "required": [
            "first_event_idx",
            "last_event_idx",
            "activity_name",
            "is_new_activity"
          ],
          "properties": {
            "first_event_idx": {
              "type": "integer"
            },
            "last_event_idx": {
              "type": "integer"
            },
            "activity_name": {
              "type": "string"
            },
            "is_new_activity": {
              "type": "boolean"
            }
          }
        }
      },
      "new_activities": {
        "type": "array",
        "items": {
          "type": "object",
          "additionalProperties": false,
          "required": [
            "name",
            "description"
          ],
          "properties": {
            "name": {
              "type": "string"
            },
            "description": {
              "type": "string"
            }
          }
        }
      },
      "merges": {
        "type": "array",
        "items": {
          "type": "object",
          "additionalProperties": false,
          "required": [
            "keep",
            "remove",
            "reason"
          ],
          "properties": {
            "keep": {
              "type": "string"
            },
            "remove": {
              "type": "string"
            },
            "reason": {
              "type": "string"
            }
          }
        }
      },
      "renames": {
        "type": "array",
        "items": {
          "type": "object",
          "additionalProperties": false,
          "required": [
            "old",
            "new",
            "new_description",
            "reason"
          ],
          "properties": {
            "old": {
              "type": "string"
            },
            "new": {
              "type": "string"
            },
            "new_description": {
              "type": "string"
            },
            "reason": {
              "type": "string"
            }
          }
        }
      }
    }
  }
}

\end{prompt_curator_sch}
\end{mdframed}

\subsubsection{LLM Human Model Prompt}

\lstnewenvironment{prompt_human}[1][]
{\lstset{
    basicstyle=\scriptsize\ttfamily, 
    breaklines=true,
    breakatwhitespace=true,
    breakindent=0em,
    breakautoindent=true,
    #1}},
{}
    
\begin{mdframed}[
    linewidth=1pt, 
    linecolor=black, 
    backgroundcolor=gray!10, 
    roundcorner=5pt, 
    frametitle={Example LLM Human Model Prompt},
    frametitlealignment=\center, 
    everyline=true,
    nobreak=false,
]
\begin{prompt_human}
You are simulating a human in a household where a robot assistant sometimes acts on its own. When the robot's action diverges from your plan, decide what the human would actually do. Adapt naturally: sometimes correct the robot, sometimes go along with what it did, sometimes skip the planned step entirely. Output one decision and a one-sentence rationale.

Current activity: breakfast
You just noticed the robot moved the laptop to sofa.
Your upcoming planned actions (next ~2 hours; this list is COMPLETE for that window):
  07:40 move laptop -> desk  <-- this object
  08:40 move notebook -> desk
  11:40 move plate -> table

Day history so far:
  06:30 human moved cereal -> table (did it as usual)
  06:35 robot moved bowl -> sink
  06:40 human moved milk -> table (did it as usual)
  07:10 human moved bowl -> sink (nothing to do, robot already handled it)

You move objects to minimize your own effort. If you'll need this object soon at a known place, it's wasteful to undo the robot and then move it again later -- instead move it straight there now, in one trip. Only undo if leaving the object where it is would be actively harmful (e.g. a perishable food left out of the fridge will spoil), or if you have no foreseeable use for it. IMPORTANT: the plan above lists EVERY move you will make in that window. If this object does not appear in it, you will not need it in that window -- do not assume a need based on the current activity or your habits; judge only from the listed actions.

Examples of how this human acted in similar situations:

Example 1:
Current activity: cooking.
You just noticed the robot moved the plate to the knifeblock.
Upcoming plan includes moving plate -> sink (in ~10 min).
Decision: undo. Rationale: The knifeblock is not where the plate belongs and you need it at the sink soon; quicker to undo now.

Example 2:
Current activity: idle.
You just noticed the robot moved the mug to the coffee_maker.
Upcoming plan has no mug actions in the next several hours.
Decision: allow. Rationale: Coffee maker is a reasonable home for the mug and you don't need it; let it be.

Example 3:
Current activity: setting_table.
You just noticed the robot moved the plate to the cupboard.
Your upcoming plan: move plate -> table (in ~20 min).
Decision: redirect. Rationale: I'll need the plate on the table shortly anyway, so I'll just carry it from the cupboard straight to the table now instead of undoing the robot and moving it twice.

Example 4:
Current activity: idle.
You just noticed the robot moved the raw_steak to the kitchen_cabinet.
Upcoming plan has no raw_steak actions for the rest of the day (you'll cook it tomorrow).
Decision: undo. Rationale: Raw steak left in a cabinet will spoil -- I need to put it back in the fridge even though it's extra work, because leaving it out is harmful.

Example 5:
Current activity: relaxing.
You just noticed the robot moved the remote_control to the tvstand.
Upcoming plan has no remote_control actions in the next 2 hours.
Decision: allow. Rationale: The tvstand is a fine spot for the remote and I won't need to put it anywhere specific soon, so I'll leave it.

Decide: 'redirect' (move it now to desk -- the place you'll need it next, saving a second trip), 'undo' (move it back where it belongs -- only if leaving it is harmful, e.g. it will spoil), or 'allow' (leave it where the robot put it).
\end{prompt_human}
\end{mdframed}

\lstnewenvironment{prompt_human_sch}[1][]
{\lstset{
    basicstyle=\scriptsize\ttfamily, 
    breaklines=true,
    breakatwhitespace=true,
    breakindent=0em,
    breakautoindent=true,
    #1}},
{}
    
\begin{mdframed}[
    linewidth=1pt, 
    linecolor=black, 
    backgroundcolor=gray!10, 
    roundcorner=5pt, 
    frametitle={Response Schema},
    frametitlealignment=\center, 
    everyline=true,
    nobreak=false,
]
\begin{prompt_human_sch}
{
  "name": "human_decision_observe_redirect",
  "strict": true,
  "schema": {
    "type": "object",
    "additionalProperties": false,
    "required": [
      "decision",
      "rationale"
    ],
    "properties": {
      "decision": {
        "type": "string",
        "enum": [
          "undo",
          "allow",
          "redirect"
        ]
      },
      "rationale": {
        "type": "string"
      }
    }
  }
}
\end{prompt_human_sch}
\end{mdframed}

\subsubsection{Plain LLM Baseline Prompt}

\lstnewenvironment{prompt_plain}[1][]
{\lstset{
    basicstyle=\scriptsize\ttfamily, 
    breaklines=true,
    breakatwhitespace=true,
    breakindent=0em,
    breakautoindent=true,
    #1}},
{}
    
\begin{mdframed}[
    linewidth=1pt, 
    linecolor=black, 
    backgroundcolor=gray!10, 
    roundcorner=5pt, 
    frametitle={Example Plain LLM Baseline Prompt},
    frametitlealignment=\center, 
    everyline=true,
    nobreak=false,
]
\begin{prompt_plain}
You are a proactive household robot living with one person. You watch where they move objects each day. Learn their routine from the past days of observations, then - given today's observations so far - predict which objects they will need moved next and where, so the robot can pre-place them. Predict only moves that genuinely help the person's routine right now or very soon.

## Past 2 days of observations (learn the person's routine from these)
### Day -2
0. 06:20 tooth_paste: toothbrush_holder -> bathroom_cabinet
1. 07:20 remote_control: tvstand -> sofa
2. 08:10 bowl: cupboard -> table
3. 08:10 remote_control: sofa -> tvstand
4. 08:20 coffee_filter: cupboard -> coffe_maker
5. 08:20 food_cereal: cupboard -> table
6. 08:20 ground_coffee: cupboard -> kitchen_counter
7. 08:20 milk: fridge -> table
8. 08:20 mug: cupboard -> coffe_maker
9. 08:20 spoon: cupboard -> bowl
10. 08:30 coffee_filter: coffe_maker -> sink
11. 08:30 ground_coffee: kitchen_counter -> cupboard
12. 08:30 mug: coffe_maker -> table
13. 08:40 food_cereal: table -> cupboard
14. 08:40 milk: table -> fridge
15. 08:50 bowl: table -> sink
16. 08:50 mug: table -> sink
17. 08:50 spoon: bowl -> sink
18. 09:00 bowl: sink -> cupboard
19. 09:00 spoon: sink -> cupboard
20. 09:10 coffee_filter: sink -> cupboard
21. 09:10 mug: sink -> cupboard
22. 10:10 headset: filing_cabinet -> desk
23. 10:20 headset: desk -> filing_cabinet
24. 11:10 food_bread: kitchen_counter -> kitchen_counter
25. 11:10 food_cheese: fridge -> kitchen_counter
26. 11:10 plate: cupboard -> kitchen_counter
27. 11:20 knife: knifeblock -> kitchen_counter
28. 11:20 plate: kitchen_counter -> table
29. 11:30 food_cheese: kitchen_counter -> fridge
30. 11:30 knife: kitchen_counter -> sink
31. 11:30 plate: table -> cupboard
32. 11:40 knife: sink -> cupboard
33. 13:00 remote_control: tvstand -> sofa
34. 13:50 remote_control: sofa -> tvstand
35. 14:10 cd: tvstand -> cd_player
36. 14:30 cd: cd_player -> table
37. 15:50 book: bookshelf -> sofa
38. 16:00 book: sofa -> bookshelf
39. 17:10 cookingpot: cupboard -> stove
40. 17:10 dry_pasta: cupboard -> kitchen_counter
41. 17:10 fryingpan: cupboard -> stove
42. 17:10 oil: cupboard -> kitchen_counter
43. 17:10 plate: cupboard -> kitchen_counter
44. 17:20 food_chicken: fridge -> kitchen_counter
45. 17:30 cookingpot: stove -> table
46. 17:30 dry_pasta: kitchen_counter -> cupboard
47. 17:30 food_chicken: kitchen_counter -> fridge
48. 17:30 fryingpan: stove -> table
49. 17:30 plate: kitchen_counter -> table
50. 17:30 spoon: cupboard -> cookingpot
51. 17:40 fork: cupboard -> sink
52. 17:40 spoon: cookingpot -> sink
53. 17:50 cookingpot: table -> sink
54. 17:50 fryingpan: table -> sink
55. 17:50 plate: table -> sink
56. 18:30 plate: sink -> cupboard
57. 18:40 cookingpot: sink -> cupboard
58. 18:40 fryingpan: sink -> cupboard
59. 18:40 spoon: sink -> cupboard
60. 19:10 chessboard: bookshelf -> table
61. 19:10 deck_of_cards: bookshelf -> table
62. 19:10 food_donut: kitchen_counter -> kitchen_counter
63. 19:40 food_donut: kitchen_counter -> trashbag
64. 19:50 chessboard: table -> bookshelf
65. 19:50 deck_of_cards: table -> bookshelf
66. 20:00 remote_control: tvstand -> sofa
67. 20:50 remote_control: sofa -> tvstand

### Day -1
0. 06:30 tooth_paste: toothbrush_holder -> bathroom_cabinet
1. 07:30 remote_control: tvstand -> sofa
2. 08:20 bowl: cupboard -> table
3. 08:20 remote_control: sofa -> tvstand
4. 08:30 cup: cupboard -> table
5. 08:30 food_oatmeal: cupboard -> table
6. 08:30 juice: cupboard -> table
7. 08:30 milk: fridge -> table
8. 08:40 spoon: cupboard -> bowl
9. 08:50 bowl: table -> sink
10. 08:50 food_oatmeal: table -> cupboard
11. 08:50 juice: table -> cupboard
12. 08:50 milk: table -> fridge
13. 08:50 spoon: bowl -> sink
14. 09:00 bowl: sink -> cupboard
15. 09:00 spoon: sink -> cupboard
16. 09:10 cup: table -> cupboard
17. 11:10 food_bread: kitchen_counter -> kitchen_counter
18. 11:10 food_cheese: fridge -> kitchen_counter
19. 11:10 plate: cupboard -> kitchen_counter
20. 11:20 knife: knifeblock -> kitchen_counter
21. 11:20 plate: kitchen_counter -> table
22. 11:30 food_cheese: kitchen_counter -> fridge
23. 11:30 knife: kitchen_counter -> sink
24. 11:30 plate: table -> sink
25. 11:40 knife: sink -> cupboard
26. 11:40 plate: sink -> cupboard
27. 14:10 radio: tvstand -> dining_room
28. 15:30 book: bookshelf -> chair
29. 15:40 book: chair -> bookshelf
30. 17:10 cookingpot: cupboard -> stove
31. 17:10 fryingpan: cupboard -> stove
32. 17:10 oil: cupboard -> kitchen_counter
33. 17:10 plate: cupboard -> kitchen_counter
34. 17:20 food_rice: cupboard -> kitchen_counter
35. 17:20 food_vegetable: fridge -> kitchen_counter
36. 17:30 cookingpot: stove -> table
37. 17:30 food_rice: kitchen_counter -> cupboard
38. 17:30 food_vegetable: kitchen_counter -> fridge
39. 17:30 fork: cupboard -> table
40. 17:30 fryingpan: stove -> table
41. 17:30 plate: kitchen_counter -> table
42. 17:30 spoon: cupboard -> table
43. 17:40 cookingpot: table -> sink
44. 17:40 fork: table -> sink
45. 17:40 fryingpan: table -> sink
46. 17:40 plate: table -> sink
47. 17:40 spoon: table -> sink
48. 18:00 chessboard: bookshelf -> table
49. 18:00 deck_of_cards: bookshelf -> table
50. 18:00 tea: cupboard -> table
51. 18:10 cup: cupboard -> table
52. 18:50 cup: table -> cupboard
53. 18:50 tea: table -> cupboard
54. 19:00 chessboard: table -> bookshelf
55. 19:00 deck_of_cards: table -> bookshelf
56. 19:40 plate: sink -> cupboard
57. 19:50 cookingpot: sink -> cupboard
58. 19:50 fryingpan: sink -> cupboard
59. 19:50 spoon: sink -> cupboard
60. 20:40 remote_control: tvstand -> sofa
61. 21:30 remote_control: sofa -> tvstand

## Today so far (incomplete; current time is 08:40)
0. 06:20 tooth_paste: toothbrush_holder -> bathroom_cabinet
1. 07:20 remote_control: tvstand -> sofa
2. 08:10 bowl: cupboard -> table
3. 08:10 remote_control: sofa -> tvstand
4. 08:20 food_oatmeal: cupboard -> table
5. 08:20 milk: fridge -> table

## Predict up to 10 object relocations the robot should do RIGHT NOW (object and destination), based on what the person will need next in their routine.
\end{prompt_plain}
\end{mdframed}

\lstnewenvironment{prompt_plain_sch}[1][]
{\lstset{
    basicstyle=\scriptsize\ttfamily, 
    breaklines=true,
    breakatwhitespace=true,
    breakindent=0em,
    breakautoindent=true,
    #1}},
{}
    
\begin{mdframed}[
    linewidth=1pt, 
    linecolor=black, 
    backgroundcolor=gray!10, 
    roundcorner=5pt, 
    frametitle={Response Schema},
    frametitlealignment=\center, 
    everyline=true,
    nobreak=false,
]
\begin{prompt_plain_sch}
{
  "name": "predict_relocations",
  "strict": true,
  "schema": {
    "type": "object",
    "additionalProperties": false,
    "required": [
      "predictions"
    ],
    "properties": {
      "predictions": {
        "type": "array",
        "items": {
          "type": "object",
          "additionalProperties": false,
          "required": [
            "object",
            "destination"
          ],
          "properties": {
            "object": {
              "type": "string",
              "enum": [
                "book",
                "bowl",
                "cd",
                "cd_player",
                "chessboard",
                "cleaning_solution",
                "... 61 total"
              ]
            },
            "destination": {
              "type": "string",
              "enum": [
                "bathroom",
                "bathroom_cabinet",
                "bathroom_counter",
                "bedroom",
                "book",
                "bookshelf",
                "... 85 total"
              ]
            }
          }
        }
      }
    }
  }
}
\end{prompt_plain_sch}
\end{mdframed}

\makeatletter
\let\@origsection\section
\def\section*#1{\let\section\@origsection\@origsection*{Appendix References}}
\makeatother
\bibliographystyleapp{ieeetr}
\bibliographyapp{references}

\end{document}